	\newtheorem{theorem}{Theorem}
	\newtheorem{lemma}{Lemma}
	\newtheorem{proposition}{Proposition}
	\newtheorem{definition}{Definition}
    \newcommand{\prl}[1] 		{\!\left(#1\right)}
    \newcommand{\brl}[1] 		{\left[ #1 \right] }
    \newcommand{\crl}[1] 		{\left\{ #1 \right\} }
    \newcommand{\R}             {\mathbb{R}} 
 	\newcommand{\N}         		{\mathbb{N}} 
 	\newcommand{\Sp}        		{\mathbb{S}} 
  	\newcommand{\card}[1]		{\left| #1 \right|} 
	\newcommand{\norm}[1] 		{\left \| #1 \right \|} 
    \newcommand{\absval}[1] 		{\left | #1 \right |}
    \newcommand{\dotprod}[2]    { #1 \cdot #2}
	\newcommand{\argmin}{\operatornamewithlimits{arg\ min}} 
	\newcommand{\argmax}{\operatornamewithlimits{arg\ max}} 
	\newcommand{\mat}[1] 		{\mathrm{\mathbf{#1}}}
	\newcommand{\vect}[1] 		{\mathrm{#1}}
	\newcommand{\vectbf}[1]		{\mathrm{\mathbf{#1}}}
    \newcommand{\refeqn}[1]			{(\ref{#1})}
    \newcommand{\reffig}[1]			{Figure \ref{#1}}
    \newcommand{\refsec}[1]			{Section \ref{#1}}
    \newcommand{\refapp}[1]			{Appendix \ref{#1}}
	\newcommand{\reftab}[1]			{Table \ref{#1}}
	\newcommand{\refthm}[1]			{Theorem \ref{#1}}
	\newcommand{\refprop}[1]		{Proposition \ref{#1}}
	\newcommand{\reflem}[1]			{Lemma \ref{#1}}
	\newcommand{\refdef}[1]			{Definition \ref{#1}}
    \newcommand{\ldf}				{  \: {\mathbf := }  \: }
    \newcommand{\sqz}[1]            {\! #1 \!}
\newcommand{\bigO}[1] 			{\mathrm{O}\prl{#1}}
\def \mathscr {\mathcal} 
\newcommand{\bintreetopspace}     {\mathcal{BT}}
\newcommand{\tree}        		{\tau}
\newcommand{\treetop}			{\tau}
\newcommand{\treeA}              {\sigma}
\newcommand{\treeB}             {\tau}
\newcommand{\indexset}			{J}
\newcommand{\dist} 				{\mathit{d}}
\newcommand{\ctrd}[1]			{\mathrm{c}\prl{ #1}}
\newcommand{\var}[1]            {\mathrm{v} \prl{#1}}
\newcommand{\SSE}[1]               {\mathrm{SSE}\prl{#1}}
\newcommand{\cluster}[1] 		{\mathscr{C} \left( #1 \right ) }
\newcommand{\PowerSet}[1]       {\mathcal{P}\prl{#1}}
\newcommand{\parentCL}[1]		{Pr \left( #1 \right)}
\newcommand{\grandparentCL}[1]	{\mathrm{Pr}^2\prl{#1}}
\newcommand{\grandchildset}[1]   {\mathcal{G}\prl{#1}}
\newcommand{\childCL}[1]		    {\mathrm{Ch}\prl{#1}}
\newcommand{\ancestorCL}[1]     {Anc \left( #1\right )}
\newcommand{\complementCL}[1]   {{#1}^{C}}
\newcommand{\complementLCL}[1]  {{#1}^{LC}}
\newcommand{\compLCL}[2]        {{#1}^{- #2}}
\newcommand{\lv}  				{ \ell } 
\newcommand{\depth}[1]          {\lv_{#1}}
\newcommand{\linkage}           {\zeta}
\newcommand{\HAC}              {\mathtt{HAC}}
\newcommand{\AHC}              {\mathtt{AHC}}
\newcommand{\IHC}              {\mathtt{IHC}}
 \newcommand{\NNI}              	{\text{NNI}}
\def\@copyrightspace{\relax}
\def\ps@myheadings{
             \def\@oddhead{\hbox{}\sl\rightmark \hfil }
		     \def \@evenhead{ \hfil \sl\leftmark\hbox{}}
		     \def\sectionmark##1{}%
		     \def\subsectionmark##1{}     
		     }
\begin{document}
\pagestyle{myheadings}
\markboth{ESE Technical Report - \today}{ESE Technical Report - \today}
%

\title{Anytime Hierarchical Clustering 
}
%
%
%
%
%

\numberofauthors{2} 
%
\author{
%
%
\alignauthor
Omur Arslan\\
\affaddr{Department of Electrical and Systems Engineering, University of
Pennsylvania, Philadelphia, PA 19104}
\email{omur@seas.upenn.edu}
\alignauthor 
Daniel E. Koditschek \\
\affaddr{Department of Electrical and Systems Engineering, University of
Pennsylvania, Philadelphia, PA 19104}
\email{kod@seas.upenn.edu}
%
}
\maketitle
\begin{abstract}

We propose a new anytime hierarchical clustering method that iteratively transforms an arbitrary initial hierarchy on the configuration of measurements along a sequence of trees we prove for a fixed data set must terminate in a chain of nested partitions that satisfies a natural homogeneity requirement.  
Each recursive step re-edits the tree so as to improve a local measure of cluster homogeneity  that is compatible with a number of commonly used (e.g., single, average, complete) linkage functions.  
As an alternative to the standard batch algorithms, we present numerical evidence to suggest that appropriate adaptations of this method can yield decentralized, scalable algorithms suitable for distributed /parallel computation of clustering hierarchies and online tracking of clustering trees applicable to large, dynamically changing databases and anomaly detection. 

\end{abstract}
\category{H.3.3}{Information Storage and Retrieval}{Information Search and Retrieval}[Clustering]
\category{I.5.3}{Pattern Recognition}{Clustering}
\terms{Theory, Algorithms}
\keywords{Online Clustering, Homogeneity, Anytime Algorithm, Cluster Tracking, Nearest Neighbor Interchange, Big Data}

\section{Introduction}
\label{sec.Introduction}

The explosive growth of data sets in recent years is fueling a search for efficient and effective knowledge discovery tools. 
Cluster analysis \cite{berkhin_GMD2006, jain_dubes_1988,han_kamber_pei_2006} is a nearly ubiquitous  tool for unsupervised learning  aimed at discovering unknown groupings within a given set of unlabelled data points (objects, patterns), presuming that objects in the same group (cluster) are more similar to each other (intra-cluster homogeneity) than objects in other groups (inter-cluster separability).  
Amongst the many alternative methods, this paper focuses on dissimilarity-based hierarchical clustering as represented by a tree that indexes a chain of successively more finely nested partitions of the dataset. 
We are motivated to explore this approach to knowledge discovery because clustering can be imposed on arbitrary data types, and hierarchy can relieve the need to commit a priori to a specific partition cardinality  or granularity \cite{berkhin_GMD2006}.  
However, we are generally interested in online or reactive problem settings and in this regard hierarchical clustering suffers a number of long discussed weaknesses that become  particularly acute when scaling up to large, and, particularly, dynamically changing, data sets \cite{berkhin_GMD2006,han_kamber_pei_2006}. 
Construction of a clustering hierarchy (tree) generally requires $\bigO{n^2}$ time with the number
of data points $n$ \cite{olson_PC1995}.  
Moreover, whenever a data set is changed by insertion, deletion or update of a single data point, a clustering tree must generally be reconstructed in its entirety. 
This paper addresses the problem of   anytime online reclustering.

\subsection{Contributions of The Paper}
\label{sec.Contribution}

We introduce a new  homogeneity criterion  applicable to an array of  agglomerative (``bottom up") clustering methods through a test involving their  ``linkage function" --- the mechanism by which dissimilarity at the level of individual data entries is ``lifted" to the level of the clusters they are assigned. 
That criterion motivates  a ``homogenizing" local adjustment of the nesting relationship between proximal clusters in the hierarchy
that increases the degree of similitude within them while increasing the dissimilarity between them. 
We show that iterated application of this local homogenizing operation transforms any initial cluster hierarchy through  a succession of  increasingly ``better sorted" ones along a path in the abstract space of hierarchies that we prove, for a fixed data set and with respect to a family of linkages including the common single, average and complete cases, must converge in a finite number of steps. 
In particular, for the single linkage function, we prove convergence from any initial condition of any  sequence of arbitrarily chosen local homogenizing reassignments to the generically unique\footnote{In the generic case, all pairwise distances of data points are distinct and this guarantees that single linkage clustering yields a unique tree \cite{gallager_humblet_spira_TOPLAS1983,gower_ross_jrss1969}.}, globally homogeneous  hierarchy that would emerge from application of the standard,  one-step ``batch" single linkage based agglomerative clustering procedure. 

We present evidence to suggest that decentralized algorithms based upon this homogenizing transformation can  scale effectively for anytime online hierarchical clustering of large and dynamically changing data sets. 
Each local homogenizing adjustment entails computation over a proper subset of the entire dataset  --- and, for some linkages, merely  its sufficient statistics (e.g. mean, variance).
In these circumstances, given the sufficient statistics of a dataset, 
such a restructuring decision at a node of a clustering hierarchy can be made in constant time (for further discussion see \refsec{sec.ComplexityAnalysis}).
Recursively defined (``anytime") algorithms such as this are naturally  suited to time varying data sets that arise  insertions, deletions or updates of a set of data points must be accommodated.  
Our particular  local restructuring method can also cope with time-varying dissimilarity measures or cluster linkage functions such as might result from the introduction of  learning aimed at increasing clustering accuracy \cite{xing_EtAl_NIPS2002}.

\subsection{A Brief Summary of Related Literature}
\label{sec.RelatedLiterature}

Two common approaches to remediating the limited scaling capacity and static nature of hierarchical clustering methods are data abstraction (summarization) and incremental clustering \cite{berkhin_GMD2006,jain_murty_flynn_CSUR1999}. 


Rather than improving algorithmic complexity of a specific clustering method, data abstraction aims to scale down a large data set with minimum loss of information for efficient clustering.
The large  literature on data abstraction includes (but is not limited to) such widely used methods as:  random sampling (e.g., CLARANS \cite{ng_han_TOKDE2002}); selection of representative points (e.g., CURE \cite{guha_rastogi_shim_IS2001}, data bubble \cite{breunig_kriegel_kroger_sander_SIGMOD2001}); usage of cluster prototypes(e.g., Stream \cite{guhaEtAl_TKDE2003}) and sufficient statistics (e.g., BIRCH \cite{zhang_ramakrishnan_livny_SIGMOD1996}, scalable $k$-means \cite{bradley_fayyad_reina_KDD1998}, CluStream \cite{aggarwalEtAl_VLDB2003}, data squashing \cite{dumouchel_EtAl_KDD1999}); grid-based quantization  \cite{berkhin_GMD2006,han_kamber_pei_2006} and sparcification of connectivity or distance matrix (e.g., CHAMELEON \cite{karypis_han_kumar_C1999}).

In contrast, incremental approaches to hierarchical clustering generally target algorithmic improvements for efficient handling of large data sets by processing data in sequence, point by point.
Typically, incremental clustering proceeds in two stages: first (i) locate a new data point in the currently available clustering hierarchy, and then (ii) perform a set of restructuring operations (cluster merging, splitting or creation), based on a heuristic criterion, to obtain a better
clustering model. 
Unfortunately, this sequential process generally incurs unwelcome sensitivity to the order of presentation \cite{berkhin_GMD2006,jain_murty_flynn_CSUR1999}. 
Independent of the efficiency and accuracy of our clustering method, the results we report here may be of interest to those seeking  insight into the possible spread of outcomes across the combinatorial explosion of different paths through even a fixed data set.
Among the many alternatives (e.g., the widely accepted COBWEB \cite{fisher_ML1987} or BIRCH \cite{zhang_ramakrishnan_livny_SIGMOD1996} procedures), our anytime method most closely resembles the  incremental 
clustering approach of \cite{widyantoro_ioerger_yen_ICDM2002}, and relies on analogous structural criteria, using similar concepts (``homogeneity" and ``monotonicity").  
However, a major advantage afforded by our new homogeneity criterion (\refdef{def.LocalStructuralOptimality}) relative to that of \cite{widyantoro_ioerger_yen_ICDM2002} is that there is now no requirement for a minimum spanning tree over the dataset. 
Beyond ameliorating the computational burden, this relaxation  extends the applicability of our method beyond single-linkage to a subclass of linkages,  \refdef{def.Contraction},  a family of cluster distance functions that includes single, complete, average,
minimax and Ward's linkages \cite{jain_dubes_1988}. 

Of course, recursive (``anytime") methods  can be adapted to address  the  general setting of time varying data processing. 
Beyond the specifics of the data insertion problem handled  by incremental  clustering methods adapting,  we aim for reactive algorithms suited to a range of dynamic settings, including  data insertion, deletion, update or perhaps, a processing-induced  non-stationarity such time varying dissimilarity measure  or linkage function \cite{achtert_bohm_kriegel_kroger_ICDM2005}. 
Hence, as described in the previous section, we propose a partially decentralized, recursive method: a local cluster restructuring operation yielding a discrete dynamical system in the abstract space of trees guaranteed to improve the hierarchy at each step (relative to a fixed dataset) and to terminate in an appropriately homogenizing cluster hierarchy from any (perhaps even random) initial such structure.

\subsection{Organization of The Paper}
\label{sec.Organization}

\refsec{sec.Background} introduces notation and offers a brief
summary of the essential background. 
\refsec{sec.LocalStructuralOptimality} presents  our homogeneity criterion and establishes some of its properties. 
\refsec{sec.OnlineHierarchicalClustering} introduces a simple anytime hierarchical clustering method that seeks successively to ``homogenize" local clusters according to this criterion.   
We analyze the termination and complexity properties of the method and then illustrate its algorithmic implications by applying  it to the specific problem of incremental clustering.
\refsec{sec.ExperimentalEvaluation} presents experimental evaluation of the anytime hierarchical clustering method using both synthetic  and real datasets. 
We conclude with a brief discussion of future work in \refsec{sec.Conclusion}.

\section{Background \& Notation}
\label{sec.Background}


\subsection{Datasets, Patterns, and Statistics}
\label{sec.Pattern}

We consider data points (patterns, observations) in $\R^m$ with a dissimilarity measure\footnote{A dissimilarity measure $\dist : X \times X \rightarrow \R_{\geq0}$ in $X$ is a symmetric, non-negative and reflexive function, i.e.  $\dist\prl{x,y} =  \dist\prl{y,x}$, $\dist\prl{x, y} \geq 0$ and $\dist\prl{x,x} = 0$ for all $x,y \in X$. }
 $\dist : \R^m \times \R^m \rightarrow \R_{\geq 0}$, where $m \in \N$ is the dimension of the space containing the dataset and $\R_{\geq 0}$ denotes the set of non-negative real numbers.
Note that $\dist$ need not necessarily be a metric\footnote{A dissimilarity $\dist:X \times X \rightarrow \R_{\geq 0}$ is a metric if it satisfies strong reflexivity and the  triangle inequality, i.e. for all $x, y, z \in X$ $\dist\prl{x,y} = 0 \Longleftrightarrow x = y$ and $\dist\prl{x,y} \leq \dist\prl{x,z} + \dist\prl{z,y}$.}, and our results can be easily generalized to qualitative data as well, once some  dissimilarity ordering has been defined. 
 
Let $\vectbf{x} = \prl{\vect{x}_{j}}_{j \in \indexset} \in \prl{\R^m}^{\indexset}$ be a set of data points  bijectively labelled by a fixed finite index set $\indexset$, say $\indexset = \brl{n} \ldf \crl{1,2, \ldots, n}$, 
and let $\vectbf{x}|I = \prl{\vect{x}_i}_{i \in I}$ denote a partial set of  observations associated with subset $I \subseteq \indexset$, whose centroid and variance, respectively,  are 
\begin{align}
\ctrd{\vectbf{x}|I} &\ldf \frac{1}{\card{I}} \sum_{i \in I} \vect{x}_i, \label{eq.centroid} \\ 
\var{\vectbf{x}|I} &\ldf \frac{1}{\card{I}} \sum_{i \in I} \norm{\vect{x}_i - \ctrd{\vectbf{x}|I}}_2^2,  \label{eq.variance} 
\end{align}    
where $\card{I}$ and $\norm{\vect{x}}_2$ denote the cardinality of set $I$ and the standard Euclidean norm of point $\vect{x} \in \R^{m}$, respectively.
Throughout the sequel the term ``sufficient cluster statistics" denotes the cardinality, $\card{I}$, and mean \refeqn{eq.centroid} and variance \refeqn{eq.variance} for each cluster in a hierarchy \cite{bradley_fayyad_reina_KDD1998}.

\subsection{Hierarchies}
\label{sec.Hierarchies}

A rooted semi-labelled tree $\tree$ over a fixed finite index set $\indexset$, illustrated in \reffig{fig:hierarchicalrelation}, is  a directed acyclic graph $G_{\tree} = \prl{V_{\tree}, E_{\tree} }$,
whose leaves, vertices of degree one, are bijectively labeled by $\indexset$ and interior vertices all have out-degree at least two; and all of whose edges in $E_{\tree}$ are directed away from a vertex designated to be the \emph{root} \cite{billera_holmes_vogtmann_aap2001}. 
A rooted semi-labelled tree $\tree$ uniquely determines (and henceforth will be interchangeably used with) a \emph{cluster hierarchy} \cite{mirkin_1996}. 
By definition, all vertices of $\tree$  can be reached from the root through a directed path in $\tree$. 
The \emph{cluster} of a vertex $v\in V_{\treetop}$ is defined to be the set of leaves reachable from $v$ by a directed path in $\tree$. 
Accordingly, the cluster set $\cluster{\tree}$ of $\tree$ is defined to be the set of all its vertex clusters,
\begin{align}
\cluster{\tree} \ldf \crl{\cluster{v} \big| v \in V_{\tree}} \subseteq \PowerSet{\indexset},
\end{align} 
where $\PowerSet{\indexset}$ denotes the power set of $\indexset$.

\begin{figure}[tb]
\hspace{1mm} \includegraphics[width=0.48\textwidth]{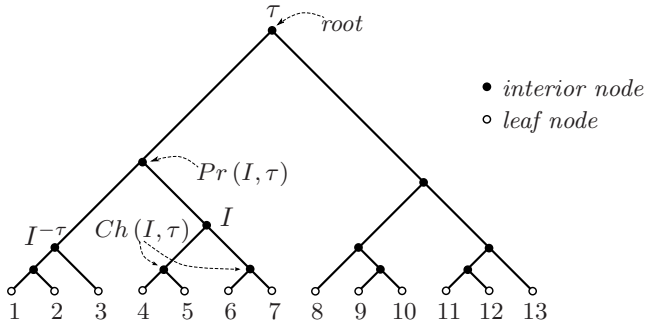} 
\caption{Hierarchical relations: parent - $\parentCL{I,\treetop}$, children - $\childCL{I,\treetop}$, and local complement (sibling) - $\compLCL{I}{\treetop}$ of cluster $I$ of a rooted binary  tree, $\treetop \in \bintreetopspace_{\brl{13}}$. 
Filled and unfilled circles represent interior and leaf nodes, respectively. An interior node is referred by its cluster, the list of leaves below it; for example, $I = \crl{4,5,6,7}$.
}
\label{fig:hierarchicalrelation}
\end{figure}

For every cluster $I \in \cluster{\tree}$ we recall the standard notion of \emph{parent} (cluster) $\parentCL{I,\tree}$ and lists of \emph{children} $\childCL{I,\tree}$ of $I$ in $\tree$, illustrated in \reffig{fig:hierarchicalrelation}. 
For the trivial case, we set $\parentCL{\indexset, \tree} = \emptyset$.
Additionally, we find it useful to define the \emph{local complement (sibling)} of cluster $I \in \cluster{\treetop}$ as $\compLCL{I}{\tree} \ldf \parentCL{I,\tree} \setminus I$, not to confused with the standard (global) complement $\complementCL{I} = \indexset \setminus I$.
Further, a grandchild in $\tree$ is a cluster $G \in 
\cluster{\tree}$ having a grandparent $\grandparentCL{G,\tree} \ldf$ $\parentCL{\big. \parentCL{G,\tree}, \tree}$ in $\tree$. 
We denote the set of all grandchildren  in $\tree$  by $\grandchildset{\tree}$, the maximal subset of $\cluster{\treetop}$ excluding the root $\indexset$ and its children $\childCL{\indexset,\treetop}$,
\begin{subequations} \label{eq.grandchild}
\begin{align} 
\grandchildset{\tree} &\ldf \crl{G \in \cluster{\tree} \big | \grandparentCL{G,\tree} \neq \emptyset}, \\
& = \cluster{\tree} \setminus \prl{ \big. \crl{\indexset} \cup \childCL{\indexset,\tree}}.
\end{align} 
\end{subequations}

A rooted tree with all interior vertices of out-degree two is said to be { \em binary } or, equivalently, { \em non-degenerate}, and  all other trees are
said to be \emph{degenerate}.  
In this paper $\bintreetopspace_{\indexset}$ denotes the set of rooted nondegenerate trees over leaf set $\indexset$.
Note that the number of  hierarchies in $\bintreetopspace_{\indexset}$  grows super exponentially \cite{billera_holmes_vogtmann_aap2001},  
\begin{align}
\card{\bintreetopspace_{\indexset}} = \prl{\big. 2\card{\indexset} - 3} !! = \prl{ \big. 2\card{\indexset} - 3} \prl{\big.2\card{\indexset} - 5} \ldots 3,  
\label{eq.numBinTree}
\end{align}
for  $\card{\indexset}\geq 2$, quickly precluding the possibility of exhaustive search for the ``best" hierarchical clustering model in even modest problem settings.

\subsection{Nearest Neighbor Interchange (NNI) Moves}
\label{sec.NNIMoves}

Different notions of the neighborhood of
a non-degenerate hierarchy in $\bintreetopspace_{\indexset}$ can be imposed by recourse to different tree restructuring operations \cite{felsenstein2004} (or \emph{moves}). 
NNI moves are particularly important for our setting because of their close relation with cluster hierarchy homogeneity (\refdef{def.LocalStructuralOptimality}) and their role in the anytime procedure introduced in  \refsec{sec.OnlineHierarchicalClustering}. 

A convenient restatement of the standard definition of NNI walks \cite{robinson_jct1971, moore_goodman_barnabas_jtb1973} for rooted trees, illustrated in \reffig{fig.NNImove}, is: 
\begin{definition} \label{def:NNIMove}
The {\em Nearest Neighbor Interchange (NNI)} move at a grandchild $G \in \grandchildset{\treeA}$ on a binary hierarchy $\treeA \in \bintreetopspace_{\indexset}$ swaps cluster $G$ with its parent's sibling $\compLCL{\parentCL{G,\treeA}}{\treeA}$ to yield another binary hierarchy $\treeB \in \bintreetopspace_{\indexset}$. 


We say that $\treeA,\treeB \in \bintreetopspace_{\indexset}$ are \emph{NNI-adjacent} if and only if one can be obtained from the other by a single NNI move.
\end{definition}
%
More precisely,  $\treeB \in \bintreetopspace_{\indexset}$ is the result of performing the NNI move  at grandchild  $G \in \grandchildset{\treeA}$ on $\treeA \in \bintreetopspace_{\indexset}$   if 
\begin{align}
\cluster{\treeB} &= \prl{ \big. \cluster{\treeA} \setminus \crl{\parentCL{G,\treeA}}} \cup \crl{\big.\compLCL{I}{\treeA} \cup \compLCL{G}{\treeA}}. \label{eq.NNImove}
\end{align} 
Throughout the sequel we will denote the map of $\bintreetopspace_{\indexset}$  into itself defining an NNI move at a grandchild cluster of a tree $G \in \grandchildset{\treeA}$   as
$\treeB = \NNI\prl{\treeA, G}$. 

\begin{figure}[t]
\centering
\vspace{-2mm}
\includegraphics[width=0.35\textwidth]{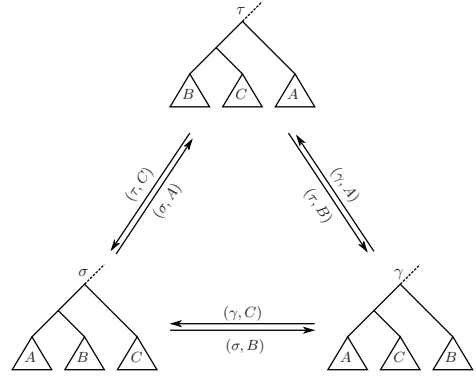} 
\vspace{-2mm}
\caption{An illustration of NNI moves between binary trees: each arrow is labeled by a source tree and associated cluster defining the move.}
\vspace{-3mm}
\label{fig.NNImove}
\end{figure}


\smallskip

A useful observation for NNI-adjacent hierarchies illustrating their structural difference is:
\begin{lemma}{\cite{ArslanEtAl_NNITechReport2013}} \label{lem.NNITriplet}
An ordered pair of hierarchies $\prl{\treeA,\treeB}$ is NNI-adjacent
if and only if there exists one and only one ordered triple $\prl{A,B,C}$ of common clusters of $\treeA$ and $\treeB$ such that $\crl{A \cup B} = \cluster{\treeA} \setminus \cluster{\treeB}$ and $\crl{B \cup C} = \cluster{\treeB} \setminus \cluster{\treeA}$.
\end{lemma}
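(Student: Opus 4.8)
The plan is to establish both implications by direct manipulation of cluster sets, using the explicit formula \refeqn{eq.NNImove} for an NNI move together with two standard facts: the clusters of a rooted binary tree form a laminar family (any two are nested or disjoint), and such a tree is recovered from its cluster set.

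For the forward implication, suppose $\treeB = \NNI\prl{\treeA, G}$ for some grandchild $G \in \grandchildset{\treeA}$. I would propose the triple $\prl{A,B,C} \ldf \prl{G,\ \compLCL{G}{\treeA},\ \compLCL{\parentCL{G,\treeA}}{\treeA}}$, namely the moved subtree $G$, its sibling, and its ``uncle''. Each of these three subtrees is either left in place or relocated intact by the swap, so it is a cluster of $\treeB$ as well as of $\treeA$; and \refeqn{eq.NNImove} identifies $A \cup B = \parentCL{G,\treeA}$ as the unique member of $\cluster{\treeA}\setminus\cluster{\treeB}$ and $B \cup C$ as the unique member of $\cluster{\treeB}\setminus\cluster{\treeA}$, which is precisely what the statement asks for. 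For uniqueness, write $X$ and $Y$ for these two distinguished clusters; in any admissible triple, $B$ is a common cluster contained in $X \cap Y$, which the binary structure of $\treeA$ forces to equal $\compLCL{G}{\treeA}$, so $A = X \setminus B$ is a common cluster containing $G$ and contained in $X = \parentCL{G,\treeA}$. Since $\parentCL{G,\treeA} \notin \cluster{\treeB}$, this leaves only $A = G$, and then $B$ and $C$ are forced.

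For the converse, assume such a triple exists. Then $\cluster{\treeA}\setminus\cluster{\treeB} = \crl{A\cup B}$ and $\cluster{\treeB}\setminus\cluster{\treeA} = \crl{B\cup C}$ are singletons, so $\treeA$ and $\treeB$ share every cluster but one apiece; set $X \ldf A\cup B$ and $Y \ldf B\cup C$. Using laminarity and the fact that $A,B$ are common while $X$ is not, I would show that $A$ and $B$ are exactly the two children of $X$ in $\treeA$, and symmetrically that $B,C$ are the children of $Y$ in $\treeB$, and that $A$ is disjoint from $Y$ and $C$ from $X$. The decisive step is to recover the common grandparent: I would show $R \ldf \parentCL{X,\treeA}$ coincides with $\parentCL{Y,\treeB}$, because inside $\treeB$ the cluster $R$ meets both $Y$ (it contains $B$) and its complement (it contains $A$), so $R \supsetneq Y$ and hence $\parentCL{Y,\treeB} \subseteq R$, while the symmetric reading in $\treeA$ gives $R \subseteq \parentCL{Y,\treeB}$. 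A chain argument --- the parent of $C$ in $\treeA$ would lie strictly inside $R$, yet, read inside $\treeB$, would be forced to contain $R$ --- then pins $C = \compLCL{X}{\treeA}$, and symmetrically $A = \compLCL{Y}{\treeB}$. At this point $\treeA$ has a node $R$ with children $X$ and $C$, and $X$ has children $A$ and $B$: exactly the configuration in which $A$ is a grandchild with parent $X$ and uncle $C$. Applying \refeqn{eq.NNImove} to $\NNI\prl{\treeA, A}$ produces a tree with cluster set $\prl{\cluster{\treeA}\setminus\crl{X}}\cup\crl{B\cup C} = \cluster{\treeB}$, so this tree is $\treeB$ and the pair is NNI-adjacent.

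I expect the converse to be the main obstacle: turning the purely combinatorial identities on $\cluster{\treeA}$ and $\cluster{\treeB}$ back into the precise local picture of an NNI move. The two subtle points --- that the two parent clusters coincide, and that $A$ and $C$ are the genuine siblings rather than merely proper subclusters --- are both handled by playing a cluster's ancestor chain in one tree against its laminar position in the other, which is where essentially all of the content lies; the forward uniqueness is routine once $B$ is recognised as the member shared by the two set-differences.
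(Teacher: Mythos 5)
The paper offers no proof of this lemma --- it is imported by citation from \cite{ArslanEtAl_NNITechReport2013} --- so there is no in-paper argument to measure yours against; judged on its own terms, your proposal is correct and essentially complete. The forward direction with the triple $\prl{G,\compLCL{G}{\treeA},\compLCL{\parentCL{G,\treeA}}{\treeA}}$ is routine, and your converse correctly identifies the two load-bearing steps (recovering the common grandparent $R$ and showing $A,C$ are genuine siblings rather than proper subclusters), both of which your ancestor-chain-versus-laminarity argument handles. Two points deserve explicit treatment in a write-up. First, pairwise disjointness of $A,B,C$ --- in particular $A\cap C=\emptyset$, which is exactly what makes ``$A$ is disjoint from $Y$'' true and hence what drives the identification $\parentCL{X,\treeA}=\parentCL{Y,\treeB}$ --- is not free: one must rule out $A\subseteq C$ and $C\subseteq A$ by checking that either containment forces $Y=X$ or forces $Y$ to be a common cluster (or violates laminarity of $\cluster{\treeB}$), each a contradiction. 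Second, in the uniqueness argument the pinning order should be reversed: from $B\subseteq X\cap Y=\compLCL{G}{\treeA}$ alone, $B$ could a priori be a \emph{proper} subcluster of the sibling; the clean route is to first deduce $A\supseteq X\setminus B\supseteq G$, note that the only $\treeA$-clusters between $G$ and its parent $X$ are $G$ and $X$ itself with $X\notin\cluster{\treeB}$, conclude $A=G$, and only then force $B=\compLCL{G}{\treeA}$ and $C=\compLCL{P}{\treeA}$. Neither point changes the architecture of your proof; both are short laminarity checks of the same kind you already deploy elsewhere.
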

We call $\prl{A,B,C}$ the ``\emph{NNI-triplet}" of  $\;\prl{\treeA,\treeB}$.

\subsection{Hierarchical Agglomerative Clustering}
\label{sec.HAC}


Given a choice of linkage, $\linkage:  \prl{\R^m}^{\indexset} \times \PowerSet{\indexset} \times \PowerSet{\indexset} \rightarrow \R_{\geq0}$, \reftab{tab.HAC} formalizes the associated Hierarchical Agglomerative $\linkage$-Clustering method \cite{jain_dubes_1988}. 
This method yields a sequence of nested partitions of the dataset that can be represented by a tree with root at the coarsest, single
cluster partition, leaves at the most refined trivial partition (comprising all singleton sets), and a vertex for each subset appearing in any of the successively coarsened partitions that appears as the mergers of \reftab{tab.HAC} are imposed. 
Because only two clusters are merged at each step, the resulting sequence of nested partitions defines a binary tree, $\treetop \in \bintreetopspace_{\indexset}$ whose nodes represent the clusters
\begin{align}
\cluster{\treetop} = \bigcup_{k= 0}^{\card{\indexset}-1} \mathcal{J}_k = \crl{\indexset} \cup \bigcup_{k=0}^{\card{\indexset} -2} \crl{A^*_k, B^*_k}. \label{eq.Partition2Cluster}
\end{align}  
and whose edges represent the nesting relation, again as presented in \reftab{tab.HAC}.  
Hence, the set of grandchildren clusters $\grandchildset{\treetop}$ \refeqn{eq.grandchild} of $\treetop$ is given by
\begin{align}
\grandchildset{\treetop} = \bigcup_{k= 0}^{\card{J}-3} \mathcal{J}_k = \bigcup_{k=0}^{\card{\indexset} -3} \crl{A^*_k, B^*_k}. \label{eq.Partition2GrandChild}
\end{align} 
From this discussion it is clear that \reftab{tab.HAC} defines a relation from datasets to trees, $\HAC_{\linkage}  \subset \prl{\R^m}^{\indexset} \times \bintreetopspace_{\indexset}$. Note that $\HAC_{\linkage}$ is in general not a function  since there may well be more than one pair of clusters satisfying \refeqn{eq.LinkageOptimal} at any  stage, $k$.
It is,  however, a multi-function: in other words, while agglomerative clustering of a dataset always yields some tree, that tree is not necessarily unique to that dataset.

\begin{table}[tb]
\caption{ \hspace{-2.75mm}Hierarchical  Agglomerative  $\linkage$-Clustering\!\cite{jain_dubes_1988}
}
\label{tab.HAC}
\centering
\begin{tabular}{|p{0.45\textwidth}|}
\hline
\\[-3mm]
For any given set of data points $\vectbf{x} \in \prl{\R^m}^{\indexset}$ and linkage function\footnotemark \;
  $\linkage : \prl{\R^m}^{\indexset} \times \PowerSet{\indexset} \times \PowerSet{\indexset} \rightarrow \R_{\geq0}$,

\begin{itemize}
\item Begin with the finest partition of $\indexset$, $\mathcal{J}_0 = \crl{\big. \crl{i}}_{i \in \indexset}$.
\item  For every $k \in [ 0, \card{J} - 1)$,
merge two blocks of $\mathcal{J}_{k}$ with the minimum linkage value,\footnotemark%
\begin{subequations}
\label{eq.HAC}
\begin{align}
\prl{A^*_k, B^*_k}  =  \argmin_{A\neq B \in \mathcal{J}_k} \linkage\prl{\vectbf{x}; A, B}, \hspace{12mm} \label{eq.LinkageOptimal}\\
\mathcal{J}_{k+1} = \crl{A^*_k \cup B^*_k} \cup \mathcal{J}_k \setminus \crl{A^*_k, B^*_k}.  \label{eq.LinkageUpdate}
\end{align}
\end{subequations}

\vspace{-5mm}

\end{itemize} 
\\
\hline 

\end{tabular}
\end{table}
\addtocounter{footnote}{-1}
\footnotetext{Note that the linkage between any partial observations and the empty set is always defined to be zero, i.e. $\linkage\prl{\vectbf{x}; I, \emptyset}= \linkage\prl{\vectbf{x};  \emptyset, I} = 0$ for all $I \subseteq \indexset$ and $\vectbf{x} \in \prl{\R^m}^{\indexset}$. } 
\addtocounter{footnote}{1}
\footnotetext{Note that a non-degenerate hierarcy over the leaf set $\indexset$ always has $\card{\indexset}- 1$  interior nodes \cite{robinson_jct1971}.}



\subsubsection{Linkages}
\label{sec.linkage}




A  linkage, $\linkage: \prl{\R^m}^{\indexset} \times \PowerSet{\indexset} \times \PowerSet{\indexset} \rightarrow \R_{\geq 0}$,  uses the dissimilarity of observations in the partial datasets, $\vectbf{x}|A$ and $\vectbf{x}|B$, to define dissimilarity between the  clusters,  $A,B \subseteq J$ and  $\vectbf{x} \in \prl{\R^m}^{\card{\indexset}}$  \cite{ackermanEtAl_COLT2010}. 
Some common examples are
{\small
\begin{subequations} \label{eq.Linkage}
\begin{align}
\linkage_{S}\prl{\vectbf{x};A,B} & \ldf \min_{\substack{a \in A \\ b \in B}} \dist\prl{\vect{x}_a , \vect{x}_b}, \label{eq.SingleLinkage} \\
\linkage_{C}\prl{\vectbf{x};A,B} &\ldf \max_{\substack{a \in A \\ b \in B}} \dist\prl{\vect{x}_a , \vect{x}_b}, \label{eq.CompleteLinkage} \\
\linkage_{A}\prl{\vectbf{x};A,B} &\ldf \frac{1}{\card{A}\card{B}}\sum_{\substack{a \in A \\ b \in B}} \dist\prl{\vect{x}_a , \vect{x}_b}, \label{eq.AverageLinkage} \\
\linkage_{M}\prl{\vectbf{x};A,B} &\ldf \min_{a \in A \cup B} \max_{b \in A \cup B} \dist\prl{\vect{x}_a, \vect{x}_b}, 
\label{eq.MinimaxLinkage} \\
\linkage_{W}\prl{\vectbf{x};A,B} &\ldf \frac{\card{A} \card{B}}{\card{A} + \card{B}} \norm{\ctrd{\vectbf{x}|A} - \ctrd{\vectbf{x}|B}}_2^2 \label{eq.WardLinkage}
\end{align}
\end{subequations}
}
for single, complete, average, minimax and Ward's linkages, respectively \cite{jain_dubes_1988, bien_tibshirano_JASA2011}, where $\dist$ and  $\norm{.}_2$ are a dissimilarity measure in $\R^m$ and the standard Euclidean norm , respectively.
%

\smallskip

A common way  of characterizing linkages is through their behaviours after merging a set of clusters. 
For any pairwise disjoint subsets $A,B,C$ of $\indexset$ and dataset $\vectbf{x} \in \prl{\R^m}^{\indexset}$, a linkage relation between partial observations $\vectbf{x}|A\cup B$ and $\vectbf{x}|C$  after merging $\vectbf{x}|A$ and $\vectbf{x}|B$ 
are generally described by the recurrence formula of Lance and Williams \cite{lance_williams_CJ1967}, 
{
\begin{align} 
\linkage\prl{\vectbf{x}; \!A \sqz{\cup} B, \!C} &= \alpha_A \linkage\prl{\vectbf{x}; \!A, \!C} + \alpha_{B} \linkage\prl{\vectbf{x}; \!B, \!C} + \beta \linkage\prl{\vectbf{x};A, B} \nonumber \\
& \hspace{10mm}+ \gamma \absval{\big.\linkage\prl{\vectbf{x}; \!A , \!C} - \linkage\prl{\vectbf{x}; \!B, \!C}}, \label{eq.Recurrence}
\end{align} 
}
where $\linkage:\prl{\R^m}^{\indexset} \times \PowerSet{\indexset} \times \PowerSet{\indexset} \rightarrow \R_{\geq 0}$ is a linkage function and  $\alpha_A, \alpha_B, \beta, \gamma \in \R$.
\reftab{tab.Recurrence} lists the coefficient of \refeqn{eq.Recurrence} for some common linkages in \refeqn{eq.Linkage}.
Although  the minimax linkage $\linkage_{M}$ \refeqn{eq.MinimaxLinkage} can not be written in the form of the recurrence formula \cite{bien_tibshirano_JASA2011}, as many other linkage functions above it satisfies 
\begin{align}
\linkage_{M}\prl{\vectbf{x}; A \cup B, C} \geq \min \prl{\big.\linkage_{M}\prl{\vectbf{x}; A,C}, \linkage_{M}\prl{\vectbf{x}; B,C}},
\end{align} 
which is known as the strong reducibility property, defined in the following paragraph. 

\begin{table}
\caption{Coefficients of the recurrence formula  of  \cite{lance_williams_CJ1967} for some common linkages}
\label{tab.Recurrence}
\begin{tabular}{|c|c|c|c|c|}
\hline 
Linkage & $\alpha_A$ & $\alpha_B$ & $\beta$ & $\gamma$ \\
\hline
\hline
Single  & $0.5$ &  $0.5$ & $0$ & $-0.5$\\
\hline
Complete & $0.5$ & $0.5$ & $0$ & $0.5$ \\
\hline
Average & $\Big.\frac{\card{A}}{\card{A}\sqz{+}\card{B}}$ & $\frac{\card{B}}{\card{A}\sqz{+}\card{B}}$ & $0$ & $0$ \\
\hline
Ward & $\Big.\frac{\card{A} \sqz{+} \card{C}}{\card{A}\sqz{+}\card{B}\sqz{+}\card{C}}$ & $\frac{\card{B} \sqz{+} \card{C}}{\card{A}\sqz{+}\card{B}\sqz{+}\card{C}}$ & $-\frac{\card{C}}{\card{A}\sqz{+}\card{B}\sqz{+}\card{C}}$ & 0 \\
\hline
\end{tabular}
\end{table}

\subsubsection{Reducibility \& Monotonicity}
\label{sec.ReducibilityMonotonicity}


\begin{definition}[\cite{bruynooghe_CAD1978,murtagh_CJ1983}] \label{def.ReducibleLinkage}
For a fixed finite index set $\indexset$, a linkage function $\linkage:\prl{\R^m}^{\indexset} \times \PowerSet{\indexset} \times \PowerSet{\indexset} \rightarrow \R_{\geq0}$ is said to be \emph{reducible} if for any pairwise disjoint subsets $A,B,C$ of $\indexset$ and set of data points $\vectbf{x} \in \prl{\R^m}^{\indexset}$
\begin{subequations}\label{eq.Reducibility}
\begin{align} \label{eq.ReducibilityCondition}
\linkage\prl{\vectbf{x}; A,B} &\leq \min \prl{\big.\linkage\prl{\vectbf{x}; A, C}, \linkage\prl{\vectbf{x}; B, C}} \end{align}
\end{subequations}
implies
\addtocounter{equation}{-1}
\begin{subequations}
\setcounter{equation}{1}
\begin{align} \label{eq.ReducibilityResult}
 \linkage\prl{\vectbf{x}; A \cup B, C} \geq \min \prl{\big.\linkage\prl{\vectbf{x}; A, C}, \linkage\prl{\vectbf{x}; B, C}}.
\end{align}
\end{subequations}
Further, say $\linkage$ is \emph{strongly reducible}\footnote{Although \cite{gordon_JRSS1987} refers to strong reducibility of linkages as the reducibility property,  by definition, strong reducibility is more restrictive than reducibility of linkages.} if for any pairwise disjoint subsets $A,B,C$ of $\indexset$ and $\vectbf{x} \in \prl{\R^m}^{\indexset}$ it satisfies
\begin{align} \label{eq.StrongReducibility}
 \linkage\prl{\vectbf{x}; A \cup B, C} \geq \min \prl{\big.\linkage\prl{\vectbf{x}; A, C}, \linkage\prl{\vectbf{x}; B, C}}.
\end{align}
\end{definition}
%
The well known examples of linkages with the strong reducibility property are single, complete, average and minimax  linkages in \refeqn{eq.Linkage} \cite{murtagh_CJ1983, bien_tibshirano_JASA2011}. 
Even though Ward's linkage is not strongly reducible, it still has the  reducibility property. 

\smallskip

A property of clustering hierarchies (of great importance in the sequel)  consequent upon  the reducibility property of linkages is monotonicity:
%
\begin{definition}[\cite{jain_dubes_1988}] \label{def.treeMonotonicity}
A non-degenerate hierarchy $\treetop \in \bintreetopspace_{\indexset}$ associated with a set of data points $\vectbf{x} \in \prl{\R^m}^{\indexset}$  is said to be $\linkage$-\emph{monotone} 
if all grandchildren, $I \in \grandchildset{\treetop}$, are more similar to their siblings, $\compLCL{I}{\treetop}$, than are their parents, $ P = \parentCL{I,\treetop}$, i.e. 
\begin{align}
\linkage\prl{\vectbf{x}; I,\compLCL{I}{\treetop}} \leq \linkage\prl{\vectbf{x}; P, \compLCL{P}{\treetop}}. 
\end{align} 
\end{definition}

%
\begin{proposition}[\cite{jain_dubes_1988}] \label{prop.treelinkageMonotonicity}
If linkage $\linkage$ is reducible, then a cluster hierarchy $\treetop \in \bintreetopspace_{\indexset}$ in the relation $\HAC_{\linkage}$ (i.e. resulting from procedure of \reftab{tab.HAC} applied to some dataset $\vectbf{x} \in \prl{\R^m}^{\indexset}$) is always $\linkage$-monotone. 
\end{proposition}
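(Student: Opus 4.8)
The plan is to reduce the proposition to the classical observation that reducibility of $\linkage$ forces the sequence of \emph{merge heights} produced by the procedure of \reftab{tab.HAC} to be non-decreasing, and then to read the monotonicity condition of \refdef{def.treeMonotonicity} off that fact using the partition-to-tree dictionary \refeqn{eq.Partition2Cluster}--\refeqn{eq.Partition2GrandChild}.

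First I would fix $\vectbf{x}\in\prl{\R^m}^{\indexset}$, run \reftab{tab.HAC} to obtain $\mathcal{J}_0,\ldots,\mathcal{J}_{\card{\indexset}-1}$ together with the merged pairs $\prl{A^*_k,B^*_k}$, and write $\delta_k \ldf \linkage\prl{\vectbf{x};A^*_k,B^*_k}$ for the height at which the cluster $A^*_k\cup B^*_k$ is created. The core step is the claim that $\delta_k\le\delta_{k+1}$ for every $k$ with $0\le k\le\card{\indexset}-3$. To prove it, let $X\ne Y$ be any two distinct blocks of $\mathcal{J}_{k+1}$; since $\delta_{k+1}$ is the minimum of $\linkage$ over all such pairs, it suffices to show $\linkage\prl{\vectbf{x};X,Y}\ge\delta_k$ in every case. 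If neither $X$ nor $Y$ is the freshly created cluster $A^*_k\cup B^*_k$, then $X,Y\in\mathcal{J}_k$ and $\linkage\prl{\vectbf{x};X,Y}\ge\delta_k$ is immediate from the minimality of $\delta_k$ at stage $k$. Otherwise $X = A^*_k\cup B^*_k$ and $Y = C$ for some block $C\in\mathcal{J}_k\setminus\crl{A^*_k,B^*_k}$; here $A^*_k,B^*_k,C$ are pairwise disjoint (distinct blocks of a partition), and the minimality of $\delta_k$ gives $\linkage\prl{\vectbf{x};A^*_k,B^*_k}\le\min\prl{\linkage\prl{\vectbf{x};A^*_k,C},\linkage\prl{\vectbf{x};B^*_k,C}}$, which is precisely the hypothesis \refeqn{eq.ReducibilityCondition} of \refdef{def.ReducibleLinkage}. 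Invoking reducibility then yields $\linkage\prl{\vectbf{x};A^*_k\cup B^*_k,C}\ge\min\prl{\linkage\prl{\vectbf{x};A^*_k,C},\linkage\prl{\vectbf{x};B^*_k,C}}\ge\delta_k$, the last inequality again from minimality of $\delta_k$. Hence $\delta_k\le\delta_{k+1}$, and iterating this gives $\delta_{k_1}\le\delta_{k_2}$ whenever $k_1\le k_2$.

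It remains to translate this into tree language. Fix a grandchild $I\in\grandchildset{\treetop}$ and set $P = \parentCL{I,\treetop}$. By \refeqn{eq.Partition2Cluster} the cluster $I$ is created by a merge, so $I\in\crl{A^*_{k_1},B^*_{k_1}}$ for a unique $k_1$; consequently $P = A^*_{k_1}\cup B^*_{k_1}$, the tree-sibling $\compLCL{I}{\treetop} = P\setminus I$ is the other block of that merge, and therefore $\linkage\prl{\vectbf{x};I,\compLCL{I}{\treetop}} = \delta_{k_1}$. The cluster $P$ first enters the partition at $\mathcal{J}_{k_1+1}$, and since $I$ is a grandchild, $P$ is not the root, so $P$ is itself merged at some stage $k_2$ with $k_1 < k_1+1\le k_2$, whence $\linkage\prl{\vectbf{x};P,\compLCL{P}{\treetop}} = \delta_{k_2}$. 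Because $k_1<k_2$ and the merge heights are non-decreasing, $\linkage\prl{\vectbf{x};I,\compLCL{I}{\treetop}} = \delta_{k_1}\le\delta_{k_2} = \linkage\prl{\vectbf{x};P,\compLCL{P}{\treetop}}$, which is exactly the inequality required by \refdef{def.treeMonotonicity}; as $I$ was arbitrary, $\treetop$ is $\linkage$-monotone.

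I expect the only delicate point to be the case analysis in the core step: one must check that the reducibility hypothesis (pairwise disjointness together with the ordering \refeqn{eq.ReducibilityCondition}) genuinely holds for the triple $\prl{A^*_k,B^*_k,C}$ at each stage, and one must be careful that the parent $P$ is merged \emph{strictly} later than $I$ (i.e.\ $k_2\ge k_1+1$) so that the chain $\delta_{k_1}\le\cdots\le\delta_{k_2}$ is available. Everything else is routine bookkeeping with the partition-to-tree correspondence already recorded in \refeqn{eq.Partition2Cluster}--\refeqn{eq.Partition2GrandChild}.
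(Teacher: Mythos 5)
The paper states this proposition as a known result cited from Jain and Dubes and gives no proof of its own, so there is nothing internal to compare against; your argument is correct and is the canonical one: reducibility forces the sequence of merge heights $\delta_0\le\delta_1\le\cdots$ produced by \reftab{tab.HAC} to be non-decreasing (no ``inversions''), and $\linkage$-monotonicity of the resulting tree is an immediate translation via \refeqn{eq.Partition2Cluster}--\refeqn{eq.Partition2GrandChild}. The only blemish is the phrase ``the cluster $I$ is created by a merge'': what you actually use (and what \refeqn{eq.Partition2GrandChild} supplies) is that $I$ is \emph{consumed} by the merge at step $k_1$, i.e.\ $I\in\crl{A^*_{k_1},B^*_{k_1}}$, which holds for singleton grandchildren as well; the rest of the argument is unaffected.
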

\section{Homogeneity}
\label{sec.LocalStructuralOptimality}


We now introduce our new notion of homogeneity and explore its relationships to previously developed structural properties of trees. 
\begin{definition}\label{def.LocalStructuralOptimality}
(Homogeneity) A binary hierarchy $\treetop \in \bintreetopspace_{\indexset}$ associated with a set of data points $\vectbf{x} \in \prl{\R^m}^{\indexset}$ 
is \emph{locally $\linkage$-homogeneous}  at grandchild cluster $I \in \grandchildset{\treetop}$ if the siblings, $I$ and $\compLCL{I}{\treetop}$, are closer to each other than to their parent's sibling, $\compLCL{P}{\treetop} = \compLCL{\parentCL{I,\treetop}}{\treetop}$, 
\begin{align} \label{eq.LocalStructuralOptimality}
\linkage\prl{\vectbf{x}; I, \compLCL{I}{\treetop}} \leq \min\prl{\big. \linkage\prl{\vectbf{x}; I, \compLCL{P}{\treetop}}, \linkage\prl{\vectbf{x}; \compLCL{I}{\treetop}, \compLCL{P}{\treetop}}}.
\end{align}
A tree is \emph{$\linkage$-homogeneous} if it is locally $\linkage$-homogeneous at each grandchild. 
\end{definition}

A useful observation when we  focus attention on reducible linkages is:
%
%
\begin{proposition}\label{prop.LST2Monotonicity}
If a tree,  $\treetop \in \bintreetopspace_{\indexset}$ associated with a set of data points $ \vectbf{x} \in \prl{\R^m}^{\indexset}$, is $\linkage$-homogeneous for a reducible linkage $\linkage$, then it must be $\linkage$-monotone as well. 
\end{proposition}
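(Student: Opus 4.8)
The plan is to reduce local $\linkage$-monotonicity at an arbitrary grandchild directly from local $\linkage$-homogeneity at that same grandchild, using the reducibility implication of \refdef{def.ReducibleLinkage}. Fix a grandchild cluster $I \in \grandchildset{\treetop}$ and write $P = \parentCL{I,\treetop}$. First I would single out the three clusters that will play the roles of $A,B,C$ in \refeqn{eq.Reducibility}: take $A = I$, $B = \compLCL{I}{\treetop}$, and $C = \compLCL{P}{\treetop}$. I would then verify that these are pairwise disjoint subsets of $\indexset$: $I$ and $\compLCL{I}{\treetop} = P \setminus I$ are disjoint by definition of the local complement, their union is exactly $P$, and $C = \compLCL{P}{\treetop} = \parentCL{P,\treetop} \setminus P$ is disjoint from $P$ and hence from both $A$ and $B$. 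Here I would also note that since $I$ is a grandchild, $P \neq \indexset$, so $\parentCL{P,\treetop}$ and therefore $C$ are well-defined and nonempty — this is precisely what makes the monotonicity inequality at $I$ meaningful.

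Next I would observe that local $\linkage$-homogeneity at $I$, inequality \refeqn{eq.LocalStructuralOptimality}, is literally the hypothesis \refeqn{eq.ReducibilityCondition} of the reducibility implication for this choice of $A,B,C$, since it reads $\linkage(\vectbf{x}; A, B) \leq \min(\linkage(\vectbf{x}; A, C), \linkage(\vectbf{x}; B, C))$. Because $\linkage$ is reducible, the conclusion \refeqn{eq.ReducibilityResult} then gives $\linkage(\vectbf{x}; A \cup B, C) \geq \min(\linkage(\vectbf{x}; A, C), \linkage(\vectbf{x}; B, C))$, and chaining this with the homogeneity inequality — which bounds $\linkage(\vectbf{x}; A, B)$ by that same minimum — yields $\linkage(\vectbf{x}; A \cup B, C) \geq \linkage(\vectbf{x}; A, B)$.

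Finally I would translate back using $A \cup B = P$, $C = \compLCL{P}{\treetop}$, and $\linkage(\vectbf{x}; A, B) = \linkage(\vectbf{x}; I, \compLCL{I}{\treetop})$, so the last inequality becomes $\linkage(\vectbf{x}; I, \compLCL{I}{\treetop}) \leq \linkage(\vectbf{x}; P, \compLCL{P}{\treetop})$, which is exactly local $\linkage$-monotonicity at $I$ in the sense of \refdef{def.treeMonotonicity}. Since $I \in \grandchildset{\treetop}$ was arbitrary and $\treetop$ is locally $\linkage$-homogeneous at every grandchild, $\treetop$ is $\linkage$-monotone. I do not expect a genuine obstacle here; the only point requiring care is the bookkeeping that identifies the triple $(I, \compLCL{I}{\treetop}, \compLCL{P}{\treetop})$ with a valid pairwise-disjoint choice of $(A,B,C)$ and confirms $\compLCL{P}{\treetop} \neq \emptyset$, which is what the grandchild hypothesis (as opposed to $I$ being merely a child of the root) buys us.
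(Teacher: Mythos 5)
Your proposal is correct and follows essentially the same route as the paper's proof: identify the triple $(I, \compLCL{I}{\treetop}, \compLCL{P}{\treetop})$ with $(A,B,C)$ in \refdef{def.ReducibleLinkage}, note that local homogeneity at $I$ is exactly the reducibility hypothesis, and chain the two inequalities to get $\linkage\prl{\vectbf{x}; I, \compLCL{I}{\treetop}} \leq \linkage\prl{\vectbf{x}; P, \compLCL{P}{\treetop}}$. Your extra bookkeeping on pairwise disjointness and the nonemptiness of $\compLCL{P}{\treetop}$ is left implicit in the paper but is a welcome clarification, not a deviation.
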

\begin{proof}
The result directly follows from homogeneity of $\treetop$ and reducibility of $\linkage$. 

For any grandchild cluster $I \in \grandchildset{\treetop}$ and  its parent $P = \parentCL{I,\treetop}$, using \refeqn{eq.Reducibility} and \refeqn{eq.LocalStructuralOptimality},
one can verify the result as
{\small
\begin{align}
\hspace{-1mm}\linkage\prl{\vectbf{x}; \!I, \!\compLCL{I}{\treetop}} \sqz{\leq} \min\prl{\big. \linkage\prl{\vectbf{x}; \! I, \! \compLCL{P}{\treetop}}\!,   \linkage\prl{\vectbf{x}; \!\compLCL{I}{\treetop} \! \!, \! \compLCL{P}{\treetop}}\!} \sqz{\leq} \linkage \prl{\vectbf{x}; \! P, \! \compLCL{P}{\treetop}} \!\!,\!\!\! 
\end{align}
}
where $P = I \cup \compLCL{I}{\treetop}$.
\hfill \qed
\end{proof}

The converse of \refprop{prop.LST2Monotonicity} only holds for single linkage:
\begin{proposition}\label{prop.SingleLinkageMonotoneLSO}
A clustering hierarchy $\treetop \in \bintreetopspace_{\indexset}$ associated with a set of data points $\vectbf{x} \in \prl{\R^m}^{\indexset}$ is $\linkage_{S}$-monotone for single linkage $\linkage_S$ \refeqn{eq.SingleLinkage}  if and only if it is $\linkage_{S}$-homogeneous as well.
\end{proposition}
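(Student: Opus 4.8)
The plan is to observe that one direction is immediate and to handle the converse by exploiting a feature special to single linkage. For the implication ``$\linkage_S$-homogeneity implies $\linkage_S$-monotonicity'' there is nothing new to do: single linkage is reducible --- indeed strongly reducible, as remarked after \refdef{def.ReducibleLinkage} --- so this is a special case of \refprop{prop.LST2Monotonicity}.

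For the converse, ``$\linkage_S$-monotone implies $\linkage_S$-homogeneous'', the crucial point is that single linkage merges \emph{exactly} by a minimum rather than merely satisfying the reducibility inequality: for any pairwise disjoint $A,B,C\subseteq\indexset$ and any $\vectbf{x}\in\prl{\R^m}^{\indexset}$,
\[ \linkage_S\prl{\vectbf{x}; A\cup B, C} = \min\prl{\big.\linkage_S\prl{\vectbf{x}; A, C}, \linkage_S\prl{\vectbf{x}; B, C}}. \]
This follows straight from the definition \refeqn{eq.SingleLinkage}, or from the Lance--Williams recurrence \refeqn{eq.Recurrence} using the single-linkage coefficients $\alpha_A=\alpha_B=\frac{1}{2}$, $\beta=0$, $\gamma=-\frac{1}{2}$ of \reftab{tab.Recurrence} together with the elementary identity $\frac{1}{2} u + \frac{1}{2} v - \frac{1}{2}\absval{u-v} = \min\prl{u,v}$.

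Given this, I would fix an arbitrary grandchild $I\in\grandchildset{\treetop}$ and set $P=\parentCL{I,\treetop}$. Because $\treetop$ is binary, $P = I \cup \compLCL{I}{\treetop}$; and since $I$ is a grandchild, $P$ has a parent, so its sibling $\compLCL{P}{\treetop}$ is a nonempty cluster disjoint from $P$, hence from both $I$ and $\compLCL{I}{\treetop}$. Applying the displayed identity with $A=I$, $B=\compLCL{I}{\treetop}$, $C=\compLCL{P}{\treetop}$ gives
\[ \linkage_S\prl{\vectbf{x}; P, \compLCL{P}{\treetop}} = \min\prl{\big.\linkage_S\prl{\vectbf{x}; I, \compLCL{P}{\treetop}}, \linkage_S\prl{\vectbf{x}; \compLCL{I}{\treetop}, \compLCL{P}{\treetop}}}, \]
and chaining this with the $\linkage_S$-monotonicity inequality $\linkage_S\prl{\vectbf{x}; I, \compLCL{I}{\treetop}} \leq \linkage_S\prl{\vectbf{x}; P, \compLCL{P}{\treetop}}$ at $I$ yields exactly the local homogeneity condition \refeqn{eq.LocalStructuralOptimality} at $I$. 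Since $I$ was arbitrary, $\treetop$ is $\linkage_S$-homogeneous.

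I do not expect any real obstacle: the only things to be careful about are using the exact equality (not just ``$\geq$'') for single-linkage merges --- which is precisely why the converse does \emph{not} extend to the other reducible linkages in \refeqn{eq.Linkage} --- and the routine bookkeeping that $\compLCL{P}{\treetop}$ is nonempty and disjoint from $I$ and $\compLCL{I}{\treetop}$, both of which are immediate from the definitions of parent, sibling, and grandchild.
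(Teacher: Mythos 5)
Your proposal is correct and follows essentially the same route as the paper: the forward direction via \refprop{prop.LST2Monotonicity}, and the converse by chaining the monotonicity inequality with the exact identity $\linkage_S\prl{\vectbf{x}; P, \compLCL{P}{\treetop}} = \min\prl{\big.\linkage_S\prl{\vectbf{x}; I, \compLCL{P}{\treetop}}, \linkage_S\prl{\vectbf{x}; \compLCL{I}{\treetop}, \compLCL{P}{\treetop}}}$ that holds because single linkage merges by an exact minimum. Your added remarks on why the equality (rather than the reducibility inequality) is essential, and on the disjointness bookkeeping, are just more explicit versions of what the paper leaves implicit.
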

\begin{proof}
The sufficiency of  $\linkage_{S}$-homogeneity of a clustering tree for its $\linkage_{S}$-monotonicity directly follows from \refprop{prop.LST2Monotonicity}.

The other way of implication is  evident from definitions of monotonicity (\refdef{def.treeMonotonicity}) and  single linkage $\linkage_{S}$ \refeqn{eq.SingleLinkage}, i.e. for any $I \in \grandchildset{\treetop}$ and $P = \parentCL{I,\treetop}$,
{\small
\begin{align}
\linkage_S \prl{\vectbf{x}; I, \compLCL{I}{\treetop}} &\leq \linkage_S\prl{\vectbf{x};P, \compLCL{P}{\treetop}}, \\
&= \min \prl{ \big.\linkage_S\prl{\vectbf{x}; I, \compLCL{P}{\treetop}}, \linkage_S\prl{\vectbf{x}; \compLCL{I}{\treetop}, \compLCL{P}{\treetop}}}, 
\end{align}
}    
where $P = I \cup \compLCL{I}{\treetop}$. \hfill \qed
\end{proof}

A major significance of homogeneity is that it is  a common characteristic feature of any clustering hierarchy resulting from   agglomerative clustering using any strong reducible linkage: 
\begin{proposition}\label{prop.HAC2LSO}
If linkage $\linkage$ is  strongly reducible then
any  non-degenerate hierarchy $\treetop \in \bintreetopspace_{\indexset}$ in the relation $\HAC_\linkage$ (i.e. resulting from the procedure of \reftab{tab.HAC} applied to some dataset $\vectbf{x} \in \prl{\R^m}^{\indexset}$ ) is $\linkage$-homogeneous. 
\end{proposition}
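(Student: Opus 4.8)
The plan is to track, for an arbitrary grandchild $I \in \grandchildset{\treetop}$, the step of the agglomerative procedure of \reftab{tab.HAC} at which $I$ is merged with its sibling, and then to play the minimality built into \refeqn{eq.LinkageOptimal} against strong reducibility. Write $P \ldf \parentCL{I,\treetop}$, so that $P = I \cup \compLCL{I}{\treetop}$; since $I$ is a grandchild, $P$ is not the root, hence $\parentCL{P,\treetop}$ is a nonempty cluster $G$ of $\treetop$ with $G = P \cup \compLCL{P}{\treetop}$ and in particular $\compLCL{P}{\treetop} \neq \emptyset$. By the correspondence \refeqn{eq.Partition2Cluster} between the mergers of \reftab{tab.HAC} and the interior nodes of $\treetop$ (the children of $A^*_k \cup B^*_k$ being precisely $A^*_k$ and $B^*_k$), there is a unique step $k$ with $A^*_k \cup B^*_k = P$, and at that step $\crl{A^*_k,B^*_k} = \crl{I,\compLCL{I}{\treetop}}$; hence, by \refeqn{eq.LinkageOptimal}, $\linkage\prl{\vectbf{x}; I, \compLCL{I}{\treetop}} = \min_{A \neq B \in \mathcal{J}_k} \linkage\prl{\vectbf{x}; A, B}$. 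It thus suffices to show this minimal value is $\leq \linkage\prl{\vectbf{x}; I, \compLCL{P}{\treetop}}$ and $\leq \linkage\prl{\vectbf{x}; \compLCL{I}{\treetop}, \compLCL{P}{\treetop}}$, which together form exactly the defining inequality \refeqn{eq.LocalStructuralOptimality} of local $\linkage$-homogeneity at $I$; as $I$ ranges over $\grandchildset{\treetop}$, this establishes the proposition.

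The crucial intermediate step is a structural claim: $\compLCL{P}{\treetop}$ is the disjoint union of a subcollection $\mathcal{D} \subseteq \mathcal{J}_k$ of blocks, each of which is a cluster of $\treetop$ with $\emptyset \neq D \subseteq \compLCL{P}{\treetop}$ (so in particular each $D$ differs from both $I$ and $\compLCL{I}{\treetop}$, these lying in $P$, which is disjoint from $\compLCL{P}{\treetop}$). Indeed, every block of $\mathcal{J}_k$ is a cluster of $\treetop$ and $\cluster{\treetop}$ is a laminar family, so each block is contained in $\compLCL{P}{\treetop}$, contains it, or is disjoint from it; but a block strictly containing $\compLCL{P}{\treetop}$ would, being a cluster of $\treetop$, contain $G = \parentCL{\compLCL{P}{\treetop},\treetop}$ and hence $I \subseteq P \subseteq G$, which is impossible since $I$ is itself a block of $\mathcal{J}_k$, distinct blocks are disjoint, and $I \cap \compLCL{P}{\treetop} = \emptyset$. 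Hence the blocks of $\mathcal{J}_k$ meeting $\compLCL{P}{\treetop}$ are exactly those contained in it, and they partition it. For every $D \in \mathcal{D}$, the minimality at step $k$ now yields $\linkage\prl{\vectbf{x}; I, \compLCL{I}{\treetop}} \leq \linkage\prl{\vectbf{x}; I, D}$ and $\linkage\prl{\vectbf{x}; I, \compLCL{I}{\treetop}} \leq \linkage\prl{\vectbf{x}; \compLCL{I}{\treetop}, D}$.

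To conclude, I would merge the blocks of $\mathcal{D}$ one at a time, invoking strong reducibility \refeqn{eq.StrongReducibility}, which applies to arbitrary pairwise disjoint subsets so that no cluster structure of the running union is needed. A one-line induction on $\card{\mathcal{D}}$ then gives $\linkage\prl{\vectbf{x}; I, \compLCL{P}{\treetop}} = \linkage\prl{\vectbf{x}; I, \bigcup_{D \in \mathcal{D}} D} \geq \min_{D \in \mathcal{D}} \linkage\prl{\vectbf{x}; I, D} \geq \linkage\prl{\vectbf{x}; I, \compLCL{I}{\treetop}}$, and the same computation with $\compLCL{I}{\treetop}$ in place of $I$ gives $\linkage\prl{\vectbf{x}; \compLCL{I}{\treetop}, \compLCL{P}{\treetop}} \geq \linkage\prl{\vectbf{x}; I, \compLCL{I}{\treetop}}$. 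Taking the minimum of the two right-hand members reproduces \refeqn{eq.LocalStructuralOptimality} at $I$, completing the argument.

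I expect the structural claim in the second paragraph to be the main obstacle: one must argue carefully that at the instant $P$ is created the sibling $\compLCL{P}{\treetop}$ is already carved out cleanly by $\mathcal{J}_k$ --- neither straddling two blocks nor absorbed into a larger one --- and separately dispatch the degenerate case $\card{\mathcal{D}} = 1$, in which $\compLCL{P}{\treetop} \in \mathcal{J}_k$ and the induction is vacuous. Everything else --- the node/merger correspondence \refeqn{eq.Partition2Cluster}, the minimality in \refeqn{eq.LinkageOptimal}, and the short induction powered by \refeqn{eq.StrongReducibility} --- is routine bookkeeping.
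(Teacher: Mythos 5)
Your proposal is correct and follows essentially the same route as the paper's own proof: identify the step $k$ at which $P = A^*_k \cup B^*_k$ is formed, decompose $\compLCL{P}{\treetop}$ into blocks of $\mathcal{J}_k$, and combine the minimality in \refeqn{eq.LinkageOptimal} with strong reducibility \refeqn{eq.StrongReducibility} to bound $\linkage\prl{\vectbf{x}; I, \compLCL{P}{\treetop}}$ and $\linkage\prl{\vectbf{x}; \compLCL{I}{\treetop}, \compLCL{P}{\treetop}}$ from below. Your laminar-family justification of the structural claim is a detail the paper asserts without proof, so your write-up is if anything slightly more complete.
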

\begin{proof}
Let $\prl{\mathcal{J}_{k}}_{0 \leq k \leq \card{J} - 1}$ be a sequence of nested partitions  of $\indexset$, defining $\treetop$ as in \refeqn{eq.Partition2Cluster},  resulting from  agglomerative $\linkage$-clustering of  $\vectbf{x}$. 
Further, for $ 0 \leq k \leq \card{\indexset}-2 \;$ let  $\prl{A^*_k,B^*_k} $  be a pair of clusters of $\mathcal{\indexset}_k$ in \refeqn{eq.LinkageOptimal}  with the minimum linkage value.

For $0 \leq k \leq \card{\indexset} -3 $ and any (grandchild) cluster $C_k \in \mathcal{\indexset}_{k} \setminus \crl{A^*_k, B^*_k}$, from \refeqn{eq.LinkageOptimal},  we have
\begin{align}
\linkage\prl{\vectbf{x}; A^*_k, B^*_k} \leq \linkage\prl{\vectbf{x}; A_k^*, C_k}, \label{eq.Ak_monotonicity}\\
\linkage\prl{\vectbf{x}; A^*_k, B^*_k} \leq \linkage\prl{\vectbf{x}; B_k^*, C_k}.
\end{align}  

Now, observe that the parent's sibling $\compLCL{\prl{A^*_k \cup B^*_k}}{\treetop}$ of $A^*_k$ and $B^*_k$
can be written as the union of elements of a subset $\mathcal{D}$ of $\mathcal{\indexset}_k \setminus \prl{A^*_k, B^*_k}$,
\begin{align}
\compLCL{\prl{A^*_k \cup B^*_k}}{\treetop} = \bigcup_{D \in \mathcal{D} } D.
\end{align} 
That is to say, the elements of $\mathcal{D}$ are merged in a way described by the sequence of nested partitions $\prl{\mathcal{J}_{k}}_{0 \leq k \leq \card{J} - 1}$ of $\indexset$ such that their union finally yields $\compLCL{\prl{A^*_k \cup B^*_k}}{\treetop}$.

Hence, using strong reducibility of $\linkage$ and \refeqn{eq.Ak_monotonicity}, one can verify that
\begin{align}
\linkage\prl{\vectbf{x}; A^*_k, \compLCL{\prl{A^*_k\cup B^*_k}}{\treetop}} &= \linkage\prl{\vectbf{x};A^*_k, \bigcup_{D \in \mathcal{D}} D}, \\
& \geq \min_{D \in \mathcal{D}} \linkage\prl{\vectbf{x}; A^*_k, D}, \\
&\geq \linkage\prl{\vectbf{x}; A^*_k, B^*_k},
\end{align}
which, by symmetry, also holds for $B^*_k$,
\begin{align}
\linkage\prl{\vectbf{x}; B^*_k, \compLCL{\prl{A^*_k\cup B^*_k}}{\treetop}} \geq \linkage\prl{\vectbf{x}; A^*_k, B^*_k}.
\end{align}

Thus, since   $\grandchildset{\treetop} = \bigcup_{k = 0}^{\card{\indexset} - 3} \crl{A^*_k, B^*_k}$ \refeqn{eq.Partition2GrandChild},  the result follows.
\hfill \qed
\end{proof}


In particular, a critical observation for  single linkage is: 

\begin{theorem}\label{thm.SingleLinkage}
A   non-degenerate hierarchy $\treetop \in \bintreetopspace_{\indexset}$ is in the relation $\HAC_{\linkage_S}$  (i.e. results from the procedure of \reftab{tab.HAC} applied to some dataset $\vectbf{x} \in \prl{\R^m}^{\indexset}$  using  $\linkage_{S}$ \refeqn{eq.SingleLinkage} as the linkage) if and only if
 $\treetop$ is $\linkage_{S}$-homogeneous (or, equivalently, $\linkage_{S}$-monotone).
\end{theorem}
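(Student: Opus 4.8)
The plan is to prove the two implications separately. Since $\linkage_S$-homogeneity and $\linkage_S$-monotonicity already coincide by \refprop{prop.SingleLinkageMonotoneLSO}, it is enough to relate membership in $\HAC_{\linkage_{S}}$ to $\linkage_S$-homogeneity. The forward direction is immediate: single linkage is strongly reducible (as recalled just after \refdef{def.ReducibleLinkage}), so \refprop{prop.HAC2LSO} already guarantees that every tree produced by the procedure of \reftab{tab.HAC} with the single linkage is $\linkage_S$-homogeneous. All the work lies in the converse.

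So let $\treetop \in \bintreetopspace_{\indexset}$ be $\linkage_S$-homogeneous (equivalently $\linkage_S$-monotone), and write $n \df \card{\indexset}$. I would first record two elementary properties of single linkage \refeqn{eq.SingleLinkage}: \emph{(i)} it is monotone in its cluster arguments, i.e.\ $A \subseteq A'$ and $B \subseteq B'$ imply $\linkage_S\prl{\vectbf{x};A,B} \geq \linkage_S\prl{\vectbf{x};A',B'}$, since restricting to fewer inter-point pairs can only raise a minimum; and \emph{(ii)} $\linkage_S\prl{\vectbf{x};A \cup B, C} = \min\prl{\linkage_S\prl{\vectbf{x};A,C}, \linkage_S\prl{\vectbf{x};B,C}}$. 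To each non-singleton cluster $C \in \cluster{\treetop}$ assign the \emph{merge value} $m(C)$ equal to the single-linkage distance between its two children in $\treetop$ — the value at which $C$ is formed. Unwinding the homogeneity inequality \refeqn{eq.LocalStructuralOptimality} at a grandchild with the help of \emph{(ii)} — or simply quoting \refdef{def.treeMonotonicity} — one obtains $m(C) \leq m\prl{\parentCL{C,\treetop}}$ for every non-singleton $C \neq \indexset$; hence $m$ is non-decreasing along inclusion, i.e.\ $C_1 \subseteq C_2$ implies $m(C_1) \leq m(C_2)$ for non-singleton clusters.

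Next I would fix a linear order $C^{(1)}, \dots, C^{(n-1)}$ of the $n-1$ non-singleton clusters of $\treetop$ that refines set inclusion and is non-decreasing in $m$ — possible precisely because $m$ is inclusion-monotone (sort by $m$ and break ties by any linear extension of $\subseteq$). Then I run the procedure of \reftab{tab.HAC} so that at its $k$-th step it merges the two children of $C^{(k)}$. Because the order refines inclusion, $\crl{C^{(1)}, \dots, C^{(k-1)}}$ is downward closed, so the partition reached before this step is exactly the set of $\subseteq$-maximal members of $\crl{C^{(1)}, \dots, C^{(k-1)}}$ together with the remaining singletons; the two children of $C^{(k)}$ appear as blocks of it, the merge is legal, and by \refeqn{eq.Partition2Cluster} the cluster set of the resulting tree is exactly $\cluster{\treetop}$. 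The crux is to check that this is a \emph{minimum-linkage} merge. For any two distinct blocks $A$ and $B$ of the current partition, let $U$ be their least common ancestor cluster in $\treetop$ and $L, R$ its two children, with $A \subseteq L$ and $B \subseteq R$; then \emph{(i)} gives $\linkage_S\prl{\vectbf{x};A,B} \geq \linkage_S\prl{\vectbf{x};L,R} = m(U)$. Since $A$ is a maximal block, $\parentCL{A,\treetop}$ is not among $C^{(1)}, \dots, C^{(k-1)}$; as $U \supseteq \parentCL{A,\treetop}$ and the order refines inclusion, $U = C^{(l)}$ for some $l \geq k$, so $m(U) \geq m\prl{C^{(k)}}$. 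Thus every alternative pair of blocks has linkage at least $m\prl{C^{(k)}}$, which equals the single-linkage distance between the two children of $C^{(k)}$, so the chosen merge attains the minimum in \refeqn{eq.LinkageOptimal}. Hence this is a genuine run of \reftab{tab.HAC} producing $\treetop$, i.e.\ $\treetop \in \HAC_{\linkage_{S}}$.

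I expect the main obstacle to be exactly this last step — keeping the bookkeeping straight among the clusters of $\treetop$, the blocks of the evolving partition, and the chosen order $C^{(\cdot)}$, so that the least-common-ancestor estimate cleanly certifies that the ``$\treetop$-merge'' is never beaten. Properties \emph{(i)} and \emph{(ii)} are the single-linkage-specific ingredients that power the argument, which is consistent with the converse genuinely failing for general strongly reducible linkages, where only the one-way implication of \refprop{prop.LST2Monotonicity} survives.
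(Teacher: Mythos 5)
Your proposal is correct and follows essentially the same route as the paper: the forward direction is delegated to \refprop{prop.HAC2LSO} via strong reducibility of $\linkage_S$, and the converse reconstructs a valid run of the procedure of \reftab{tab.HAC} using the two single-linkage-specific facts that $\linkage_S$ can only decrease when its cluster arguments are enlarged and that it distributes as a minimum over unions. Your execution of the converse is in fact somewhat more careful than the paper's --- where the paper proves the key inequality $\linkage_S\prl{\vectbf{x}; I, \compLCL{I}{\treetop}} \leq \linkage_S\prl{\vectbf{x}; I, Q}$ and then sketches the induction, you make the merge schedule explicit (sorting clusters by merge value consistently with inclusion) and certify the argmin condition at each step via the least-common-ancestor estimate --- but the underlying argument is the same.
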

\begin{proof}
The sufficiency, of being a single linkage clustering hierarchy, for   homogeneity is evident from \refprop{prop.HAC2LSO}.

To see the necessity of homogeneity, we will first prove that if $\treetop$ is $\linkage_{S}$-homogeneous, then for any $I \in \grandchildset{\treetop}$ and nonempty subset $Q \subseteq \indexset \setminus \parentCL{I,\treetop}$ the following holds
\begin{align}
\linkage_S\prl{\vectbf{x}; I, \compLCL{I}{\treetop}} \leq \linkage_{S}\prl{\vectbf{x}; I, Q}. \label{eq.SingleLinkageMonotone}
\end{align}
Observe that \refeqn{eq.SingleLinkageMonotone} states that the cost of merging any one of $I$ and $\compLCL{I}{\treetop}$ with another cluster $Q \subseteq \indexset \setminus \parentCL{I,\treetop}$ is greater and equal to the cost of merging $I$ and $\compLCL{I}{\treetop}$.
Then, by induction, we conclude that $\treetop$ is a possible outcome of agglomerative single linkage clustering of $\vectbf{x}$.

\smallskip

Let $\ancestorCL{I,\treetop}$ denote the set of ancestors of cluster $I \in \cluster{\treetop}$ of $\treetop$, except the root  $\indexset$,
\begin{align}
\ancestorCL{I,\treetop} \ldf \crl{A \in \cluster{\treetop} \setminus \crl{\indexset} \big | I \subsetneq A}.
\end{align} 

Using the definition of $\linkage_{S}$ \refeqn{eq.SingleLinkage} and monotonicity of $\treetop$, one can  verify that 
for any grandchild $I\in \grandchildset{\treetop}$ and  its ancestor $ A \in  \ancestorCL{I,\treetop}$
\begin{align}
\linkage_S\prl{\vectbf{x}; I, \compLCL{I}{\treetop}} \leq \linkage_S\prl{\vectbf{x}; A, \compLCL{A}{\treetop}} \leq \linkage_S \prl{\vectbf{x}; I, \compLCL{A}{\treetop}}. \label{eq.MonBound}
\end{align} 
Now observe that the global complement of $\parentCL{I,\treetop}$ can be written as
\begin{align}
\indexset \setminus \parentCL{I,\treetop} = \bigcup_{A \in \ancestorCL{I,\treetop}} \compLCL{A}{\treetop}. \label{eq.CompUnion}
\end{align} 
As a result, combining \refeqn{eq.MonBound} and \refeqn{eq.CompUnion} yields
\begin{align}
\linkage_S\prl{\vectbf{x}; I, \indexset \setminus \parentCL{I,\treetop}} &= \linkage_S\prl{\vectbf{x}, I, \bigcup_{A \in \ancestorCL{I,\treetop}} \compLCL{A}{\treetop}}\\
&= \min_{A \in \ancestorCL{I,\treetop}} \linkage_S\prl{\vectbf{x}; I; \compLCL{A}{\treetop}} \\
& \geq \linkage_{S}\prl{\vectbf{x}; I, \compLCL{I}{\treetop}},
\end{align}
from which one can conclude \refeqn{eq.SingleLinkageMonotone} for  single linkage $\linkage_{S}$.

Finally, using a proof by induction, the result of the theorem can be shown as follows:
\begin{itemize}
\item (Base Case) if $I, \compLCL{I}{\treetop} \in \cluster{\treetop}$ are singleton clusters, then, since they satisfy \refeqn{eq.SingleLinkageMonotone}, they can be merged at an appropriate step of the agglomerative clustering when the minimum cluster distance is equal to $\linkage_{S}\prl{\vectbf{x}; I, \compLCL{I}{\treetop}}$. 
Note that, due to \refeqn{eq.SingleLinkageMonotone},  neither $I$ nor $\compLCL{I}{\treetop}$ can be merged with any other cluster $Q \subseteq \indexset \setminus \parentCL{I,\treetop}$ at a lower linkage value than $\linkage_{S}\prl{\vectbf{x}; I, \compLCL{I}{\treetop}}$.

\item (Induction) Otherwise, suppose that  $I$ and $\compLCL{I}{\treetop}$ are already constructed since their children also satisfy \refeqn{eq.SingleLinkageMonotone} and, by monotonicity, children $\crl{I_L, I_R} =\childCL{I,\treetop}$ of $I$   satisfies  $\linkage_{S}\prl{\vectbf{x}; I_L, I_R} \leq \linkage_S\prl{\vectbf{x}; I, \compLCL{I}{\treetop}}$ as do children of $\compLCL{I}{\treetop}$.
Thus,  since clusters $I$ and $\compLCL{I}{\treetop}$ satisfy \refeqn{eq.SingleLinkageMonotone}, they   can be directly aggregated when the merging cost, i.e. the value of minimum cluster distance in \refeqn{eq.LinkageOptimal},  reaches $\linkage_S\prl{\vectbf{x}; I, \compLCL{I}{\treetop}}$.    
\end{itemize}  

\hfill \qed
\end{proof}

\section{Anytime Hierarchical Clustering}
\label{sec.OnlineHierarchicalClustering}


Given a choice of linkage, $\linkage: \prl{\R^m}^{\indexset} \times \PowerSet{\indexset} \times \PowerSet{\indexset} \rightarrow \R_{\geq 0}$, \reftab{tab.OHC} presents the formal specification of our central contribution, the associated Anytime Hierarchical $\linkage$-Clustering method. 
Once again, this method defines a new relation from datasets to hierarchies, $\AHC_{\linkage} \subset \prl{\R^m}^{\indexset} \times \bintreetopspace_{\indexset}$ that is generally not a function but rather a multi-function (i.e. all datasets yield some hierarchy, but not necessarily a unique one).

\begin{table}[t]
\caption{Anytime Hierarchical $\linkage$-Clustering}
\label{tab.OHC}
\begin{tabular}{|p{0.46\textwidth}|}
\hline
\\[-2.5mm]
For any  given clustering hierarchy $\treetop \in \bintreetopspace_{\indexset}$ associated with a set of data points $\vectbf{x} \in \prl{\R^{m}}^{\indexset}$, and  linkage function $\linkage: \prl{\R^{m}}^{\indexset} \times \PowerSet{\indexset} \times \PowerSet{\indexset} \rightarrow \R_{\geq 0}$,  
 
\begin{enumerate}
\item If $\treetop$ is  $\linkage$-homogeneous, then terminate and return $\treetop$.

\item Otherwise, 

\begin{enumerate}

\item Find a grandchild cluster $I \in \grandchildset{\treetop}$ at which $\treetop$ violates  local homogeneity, i.e. 
{\small
\begin{align}
\hspace{-2mm} \linkage\prl{\vectbf{x}; I, \compLCL{I}{\treetop}} \sqz{>} \min \prl{ \big. \linkage\prl{\vectbf{x};\! I, \!\compLCL{P}{\treetop}} \!\!, \linkage\prl{\vectbf{x}; \! \compLCL{I}{\treetop}\!,\! \compLCL{P}{\treetop}} \!} \!, \!\!
\end{align} 
}
where $P = \parentCL{I,\treetop}$.
\item Then perform an NNI restructuring  on $\treetop$ at  grandchild $G^*  \in \childCL{P,\treetop}$ with the maximum dissimilarity to $\compLCL{P}{\treetop}$, i.e. swap $G^*$ with $\compLCL{P}{\treetop}$,
\begin{subequations}
\begin{align}
G^* = \argmax_{G \in \childCL{P,\treetop} } \linkage\prl{\vectbf{x}; G, \compLCL{P}{\treetop}},\\
\treetop \leftarrow \NNI \prl{\treetop, G^*}, \hspace{15mm}
\end{align}
\end{subequations}
and go to Step 1.
\end{enumerate}
\vspace{-5mm}
\end{enumerate}  
\\
\hline
\end{tabular}
\end{table}

Because the  procedure defining  $\AHC_{\linkage}$ in \reftab{tab.OHC} does not entail any obvious gradient-like  greedy step as do many previously proposed iterative clustering methods, demonstrating that it terminates requires some analysis that we now present.

\subsection{Proof of Convergence}
\label{sec.Stability}

For any non-degenerate hierarchy $\treetop \in \bintreetopspace_{\indexset}$ associated with a set of data points  $\vectbf{x} \in \prl{\R^m}^{\indexset}$ and a linkage function $\linkage$, we consider  the sum of linkage values as an objective function to assess the quality of clustering, 
\begin{align} \label{eq.totallinkagecost}
H_{\vectbf{x}, \linkage}\prl{\treetop} \ldf \frac{1}{2}\sum_{I \in \cluster{\treetop}} \linkage\prl{\vectbf{x}; I, \complementLCL{I}}.
\end{align}
Intuitively, one might expect that hierarchical agglomerative clustering methods yield clustering hierarchies minimizing \refeqn{eq.totallinkagecost}. 
However, they are generally known to be step-wise optimal greedy methods \cite{gordon_JRSS1987} with an exception that single linkage clustering always returns a globally optimal clustering tree in the sense of \refeqn{eq.totallinkagecost} due to its close relation with a minimum spanning tree of the data set \cite{gower_ross_jrss1969}.
In contrast, for example, as witness to the general sub-optimality of agglomerative clustering relative to \refeqn{eq.totallinkagecost}, for Ward's linkage  $\linkage_W$ \refeqn{eq.WardLinkage} $H_{\vectbf{x}, \linkage_{W}}$  is constant and equal to the sum of squared error of $\vectbf{x}$ (see \refapp{app.WardLinkage}), i.e.  for any $\treetop \in \bintreetopspace_{\indexset}$
\begin{align}
H_{\vectbf{x}, \linkage_{W}}\prl{\treetop} = \sum_{i \in \indexset} \norm{\vect{x}_i - \ctrd{\vectbf{x}|\indexset}}_2^2,
\end{align}
where $\ctrd{\vectbf{x}| \indexset}$ \refeqn{eq.centroid} denotes the centroid of $\vectbf{x}|\indexset$. 

\smallskip

Let $\prl{\treeA,\treeB}$ be a pair of  NNI-adjacent (\refdef{def:NNIMove}) hierarchies in $\bintreetopspace_{\indexset}$ and $\prl{A,B,C}$ be the NNI-triplet (\reflem{lem.NNITriplet}) of common clusters of $\treeA$ and $\treeB$.
Recall that $A\cup B \in \cluster{\treeA}\setminus \cluster{\treeB}$ and $B \cup C \in \cluster{\treeB}\setminus \cluster{\treeA}$ are only unshared clusters of $\treeA$ and $\treeB$, respectively.
Hence, one can write the change in the objective function $H_{\vectbf{x}, \linkage}$ \refeqn{eq.totallinkagecost} after the NNI transition from $\treeA$ to $\treeB$ as
{
\begin{align} \label{eq.ChangeTotalLinkageCost}
 H_{\vectbf{x}, \linkage} \prl{\treeB} \sqz{-} H_{\vectbf{x}, \linkage} \prl{\treeA}& \sqz{=}    \linkage\prl{\vectbf{x}; B,C} \sqz{+} \linkage\prl{\vectbf{x}; B\sqz{\cup} C, A} \nonumber \\
 & \hspace{10mm} \sqz{-} \linkage\prl{\vectbf{x}; A,B} \sqz{-} \linkage\prl{\vectbf{x}; A\sqz{\cup} B, C}.
\end{align}
}
Here we find it useful to  define a new class of linkages: 
\begin{definition} \label{def.Contraction}
A linkage  $\linkage : \prl{\R^m}^{\indexset} \times \PowerSet{\indexset} \times \PowerSet{\indexset} \rightarrow \R_{\geq 0} $ is \emph{NNI-reducible} if for any set of data points $\vectbf{x} \in \prl{\R^m}^{\indexset}$ and pairwise disjoint subsets $A,B,C$ of $\indexset$ 
{
\begin{subequations} \label{eq.NNI-Reducibility}
\begin{align}
\linkage\prl{\vectbf{x};B,C} \leq \min \prl{ \big. \linkage\prl{\vectbf{x}; A,B}, \linkage\prl{\vectbf{x}; A,C}}.
\end{align}
\end{subequations}
}
implies
\addtocounter{equation}{-1}
{\small
\begin{subequations}
\setcounter{equation}{1}
\begin{align}
\hspace{-0mm}\min\prl{ \!\!\!\!
\begin{array}{l}
\linkage\prl{\vectbf{x}; \!A,\!B} \sqz{+} \linkage\prl{\vectbf{x}; \!A \sqz{\cup} B, \!C}\!, \\ 
\linkage\prl{\vectbf{x}; \! A, \! C} \sqz{+} \linkage\prl{\vectbf{x}; \! A \sqz{\cup} C, \! B} 
\end{array} \!\!\!\!\!} \sqz{\geq} \linkage\prl{\vectbf{x}; \! B, \! C} \sqz{+} \linkage\prl{\vectbf{x}; \! B\sqz{\cup} C, \! A}\!.\!\!\!
\end{align}
\end{subequations}
}
\end{definition}
Using \refeqn{eq.Linkage}, \refeqn{eq.Recurrence} and \reftab{tab.Recurrence} one can verify that single, complete, minimax and Ward's linkages  are examples of  NNI-reducible linkages.
Note that  a reducible linkage is not necessarily  NNI-reducible; for instance, average linkage $\linkage_{A}$ \refeqn{eq.AverageLinkage}. 
%
%

\smallskip

We now proceed to investigate the termination  of anytime hierarchical clustering for NNI-reducible linkages:
\begin{lemma} \label{lem.linkageOptimality}
For any set of data points $\vectbf{x} \in \prl{\R^m}^{\indexset}$ and  NNI-reducible linkage $\linkage$, 
the value of objective function $H_{\vectbf{x},\linkage}$ \refeqn{eq.totallinkagecost} is nonincreasing at each iteration of anytime hierarchical clustering  in \reftab{tab.OHC} from any  initial hierarchy $\treeA \in \bintreetopspace_{\indexset}$ towards $\treeB \in \bintreetopspace_{\indexset}$,
\begin{align}
H_{\vectbf{x}, \linkage}\prl{\treeB}  - H_{\vectbf{x}, \linkage}\prl{\treeA} \leq 0.
\end{align}
\end{lemma}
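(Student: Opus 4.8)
The plan is to turn one iteration of the procedure in \reftab{tab.OHC} into a single application of the NNI-reducibility inequality \refeqn{eq.NNI-Reducibility}, using the formula \refeqn{eq.ChangeTotalLinkageCost} for the change of $H_{\vectbf{x},\linkage}$ along an NNI move. First I would fix an iteration at which $\treeA$ fails to be $\linkage$-homogeneous, let $I\in\grandchildset{\treeA}$ be a grandchild certifying the violation in the sense of \refdef{def.LocalStructuralOptimality}, and set $P=\parentCL{I,\treeA}$. Since $\treeA$ is binary, $\childCL{P,\treeA}=\crl{I,\compLCL{I}{\treeA}}$, so the move performed in \reftab{tab.OHC} swaps the child $G^*$ of $P$ maximizing $\linkage\prl{\vectbf{x};\cdot,\compLCL{P}{\treeA}}$ with $\compLCL{P}{\treeA}$. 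The homogeneity-violation inequality at $I$ is symmetric under interchanging $I$ and $\compLCL{I}{\treeA}$, so after relabeling I may assume $G^*=I$; then $\treeB=\NNI\prl{\treeA,I}$ replaces the cluster $P=I\cup\compLCL{I}{\treeA}$ by $\compLCL{I}{\treeA}\cup\compLCL{P}{\treeA}$ and leaves all other clusters intact.

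Next I would read off the NNI-triplet. By \reflem{lem.NNITriplet} it is $\prl{A,B,C}=\prl{I,\compLCL{I}{\treeA},\compLCL{P}{\treeA}}$, and these three clusters are pairwise disjoint ($I$ and $\compLCL{I}{\treeA}$ are the children of $P$, and $\compLCL{P}{\treeA}$ is disjoint from $P$). Feeding this into \refeqn{eq.ChangeTotalLinkageCost} gives
\begin{align*}
H_{\vectbf{x},\linkage}\prl{\treeB}-H_{\vectbf{x},\linkage}\prl{\treeA}&=\linkage\prl{\vectbf{x};\compLCL{I}{\treeA},\compLCL{P}{\treeA}}+\linkage\prl{\vectbf{x};\compLCL{I}{\treeA}\cup\compLCL{P}{\treeA},I}\\
&\quad-\linkage\prl{\vectbf{x};I,\compLCL{I}{\treeA}}-\linkage\prl{\vectbf{x};P,\compLCL{P}{\treeA}},
\end{align*}
so it is enough to bound the first two terms by the last two. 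That bound is precisely the first branch of the conclusion of NNI-reducibility \refeqn{eq.NNI-Reducibility} read with the substitution $A\mapsto I$, $B\mapsto\compLCL{I}{\treeA}$, $C\mapsto\compLCL{P}{\treeA}$ (using $A\cup B=P$).

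Then I would verify the hypothesis of NNI-reducibility under this substitution, namely $\linkage\prl{\vectbf{x};\compLCL{I}{\treeA},\compLCL{P}{\treeA}}\leq\min\prl{\linkage\prl{\vectbf{x};I,\compLCL{I}{\treeA}},\linkage\prl{\vectbf{x};I,\compLCL{P}{\treeA}}}$. The bound by $\linkage\prl{\vectbf{x};I,\compLCL{P}{\treeA}}$ is immediate from the choice of $G^*=I$ as the child of $P$ with maximal linkage to $\compLCL{P}{\treeA}$. This same maximality identifies $\linkage\prl{\vectbf{x};\compLCL{I}{\treeA},\compLCL{P}{\treeA}}$ as the smaller of the two ``parent's-sibling'' linkages, so the $\min$ in the homogeneity-violation inequality at $I$ collapses to it; the violation then reads $\linkage\prl{\vectbf{x};I,\compLCL{I}{\treeA}}>\linkage\prl{\vectbf{x};\compLCL{I}{\treeA},\compLCL{P}{\treeA}}$, which is the remaining bound. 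With the hypothesis in hand, \refeqn{eq.NNI-Reducibility} yields $H_{\vectbf{x},\linkage}\prl{\treeB}-H_{\vectbf{x},\linkage}\prl{\treeA}\leq0$. (If the procedure performs no iteration, i.e.\ $\treeA$ is already $\linkage$-homogeneous, then $\treeB=\treeA$ and the inequality is trivial.)

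The step I expect to be the main obstacle is the bookkeeping that aligns the abstract triple of \refdef{def.Contraction} with the concrete clusters $I,\compLCL{I}{\treeA},\compLCL{P}{\treeA}$: one must recognize that the algorithm's rule ``swap the child of $P$ with the largest linkage to $\compLCL{P}{\treeA}$'' is exactly what makes both halves of the NNI-reducibility hypothesis hold --- it pins $\linkage\prl{\vectbf{x};\compLCL{I}{\treeA},\compLCL{P}{\treeA}}$ as the active term of the $\min$ in the homogeneity-violation condition, and it supplies the second inequality outright. With a different choice of which grandchild to swap, the hypothesis of NNI-reducibility could fail and the objective need not decrease.
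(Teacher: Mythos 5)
Your proposal is correct and takes essentially the same route as the paper's proof: identify the NNI-triplet as $\prl{G^*,\compLCL{G^*}{\treeA},\compLCL{P}{\treeA}}$, use the $\argmax$ choice of $G^*$ together with the homogeneity violation to establish the hypothesis of NNI-reducibility, and apply its conclusion to the change formula \refeqn{eq.ChangeTotalLinkageCost}. The only difference is one of exposition: you make explicit the bookkeeping step (that the $\argmax$ choice pins down which branch of the $\min$ in the violation condition is active), which the paper's proof leaves implicit.
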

\begin{proof}
If $\treeA$ is  $\linkage$-homogeneous, then $\treeB = \treeA$, and so the result directly follows.

Otherwise, let $\prl{A,B,C}$ be the NNI-triplet (\reflem{lem.NNITriplet}) associated with  $\prl{\treeA, \treeB}$. 
Recall that  $A \cup B \in \cluster{\treeA} \setminus \cluster{\treeB}$ and $B \cup C \in \cluster{\treeB} \setminus \cluster{\treeA}$. 
To put it another way,  anytime hierarchical clustering performs an NNI move on $\treeA$ at grandchild $A \in \grandchildset{\treetop}$ towards $\treeB$, and so 
\begin{align}
\linkage\prl{\vectbf{x}; A, B} &> \linkage\prl{\vectbf{x}; B,C}, \\
\linkage\prl{\vectbf{x}; A,C} &\geq \linkage\prl{\vectbf{x}; B,C}. 
\end{align}  
Therefore, since $\linkage$ is NNI-reducible (\refdef{def.Contraction}), the change in the objective function $H_{\vectbf{x}, \linkage}$ \refeqn{eq.ChangeTotalLinkageCost} is nonnegative,
\begin{align}
 H_{\vectbf{x}, \linkage}\prl{\treeB} -H_{\vectbf{x}, \linkage}\prl{\treeA} 
&\leq 0,
\end{align}
which completes the proof.
\hfill \qed
\end{proof}
%


\begin{theorem} \label{thm.Convergence}
If $\linkage$ is an NNI-reducible linkage, then iterated application of the Anytime Hierarchical $\linkage$-Clustering procedure  of \reftab{tab.OHC} initiated from any hierarchy in $\bintreetopspace_{\indexset}$ for a fixed set of data points $\vectbf{x} \in \prl{\R^m}^{\indexset}$ must terminate in finite time at a tree in  $\bintreetopspace_{\indexset}$, that is $\linkage$-homogeneous. 
\end{theorem}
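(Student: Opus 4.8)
The plan is to treat the procedure of \reftab{tab.OHC} as a discrete dynamical system on the finite set $\bintreetopspace_{\indexset}$ and to use the objective $H_{\vectbf{x},\linkage}$ of \refeqn{eq.totallinkagecost} as a Lyapunov-type potential, combining \reflem{lem.linkageOptimality} with the finiteness \refeqn{eq.numBinTree} of the search space. The procedure produces a sequence $\treetop_0,\treetop_1,\treetop_2,\dots$ in which each $\treetop_{t+1}$ is obtained from $\treetop_t$ by the single NNI move prescribed at a grandchild where local $\linkage$-homogeneity fails, or else $\treetop_{t+1}=\treetop_t$ and the procedure has already halted at a $\linkage$-homogeneous tree via Step~1.

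The first step is to observe that, by \reflem{lem.linkageOptimality}, $\prl{H_{\vectbf{x},\linkage}\prl{\treetop_t}}_{t\ge 0}$ is nonincreasing; since $\bintreetopspace_{\indexset}$ is finite this sequence attains only finitely many values and is therefore eventually constant, say equal to $H^*$ for all $t\ge T$. If the procedure never halted it would then perform infinitely many moves with $H_{\vectbf{x},\linkage}\prl{\treetop_{t+1}}=H_{\vectbf{x},\linkage}\prl{\treetop_t}$ while visiting only finitely many trees, and so would traverse a cycle $\treetop_T=\treetop_{T+k}$ with $k\ge 1$ of zero-change moves. The heart of the argument is to exclude such a cycle.

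To this end I would first extract, from the equality case of the implication in \refdef{def.Contraction}, a precise description of a zero-change move. Writing $\prl{A,B,C}$ for the NNI-triplet of the transition --- with $A=G^*$ the child of $P=\parentCL{G^*,\treetop}$ that is swapped past $C=\compLCL{P}{\treetop}$ --- one checks (cleanly for single linkage) that such a move merely relocates the value $\linkage\prl{A,B}$ from the cluster $A\cup B$ up to the cluster $A\cup B\cup C$, pushing the strictly smaller value $\linkage\prl{A\cup B,C}$ down onto the newly created cluster $B\cup C$, so that the multiset $\crl{\linkage\prl{I,\compLCL{I}{\treetop}} \big| I\in\cluster{\treetop}}$ is conserved along the plateau. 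Two consequences would then be used: first, the pair $\crl{B,C}$ just created is locally $\linkage$-homogeneous --- its linkage value is at most the minimum of the two relevant linkages by the very hypothesis of \refdef{def.Contraction} --- so the prescribed move can never be undone in one further step and there are no $2$-cycles; and second, one can introduce a lexicographically subordinate potential built from the conserved multiset of values together with the depths (or the ancestor/descendant incidences) of the clusters that carry them, designed to decrease strictly at every zero-change move, which would preclude cycles of any length. I expect the construction and verification of this secondary potential to be the main obstacle: the NNI move transplants the subtree of $G^*$ one level nearer the root and the subtree of $\compLCL{P}{\treetop}$ one level farther from it, so the small gain from the local exchange of two values has to be insulated from the re-indexing of these transplanted subtrees, which sharply constrains the admissible depth weights; and for linkages other than single linkage one must additionally verify that the values in question still behave monotonically enough inside the transplanted subtrees for the bookkeeping to remain valid.

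Finally, once termination is in hand the remaining claim is immediate: \reftab{tab.OHC} can only halt through Step~1, which returns $\treetop$ exactly when it is locally $\linkage$-homogeneous at every grandchild, i.e.\ $\linkage$-homogeneous in the sense of \refdef{def.LocalStructuralOptimality}.
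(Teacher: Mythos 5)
Your overall skeleton coincides with the paper's: finiteness of $\bintreetopspace_{\indexset}$, the monotone potential $H_{\vectbf{x},\linkage}$ from \reflem{lem.linkageOptimality}, and the reduction of termination to excluding cycles along a plateau of $H_{\vectbf{x},\linkage}$. The gap is exactly where you flag it: the plateau analysis. Your proposed mechanism --- that a zero-change move conserves the \emph{multiset} of sibling-linkage values, merely permuting which clusters carry them --- is false in general. On a plateau one only has the sum identity $\linkage\prl{\vectbf{x};A,B}+\linkage\prl{\vectbf{x};A\cup B,C}=\linkage\prl{\vectbf{x};B,C}+\linkage\prl{\vectbf{x};B\cup C,A}$ together with $\linkage\prl{\vectbf{x};A,B}>\linkage\prl{\vectbf{x};B,C}$; nothing forces $\linkage\prl{\vectbf{x};B\cup C,A}=\linkage\prl{\vectbf{x};A,B}$ or $\linkage\prl{\vectbf{x};B,C}=\linkage\prl{\vectbf{x};A\cup B,C}$ (take values $5,3$ replaced by $2,6$). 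So a secondary potential built on ``the conserved multiset of values together with depth weights'' has no foundation outside the single-linkage case, and your correct observation that the move cannot be immediately undone only rules out $2$-cycles. Since you leave the construction of the secondary potential as an acknowledged open obstacle, the proof as proposed is incomplete.

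The paper's resolution of precisely the obstacle you describe (insulating the local gain from the re-levelling of the transplanted subtrees) is worth recording: the only quantitative facts needed are the sum identity above and the rule's inequality, which together give $\linkage\prl{\vectbf{x};A\cup B,C}<\linkage\prl{\vectbf{x};B\cup C,A}$, i.e.\ the linkage between the two children of the fixed grandparent $P=A\cup B\cup C$ strictly increases. Defining $f_{\treetop}\prl{t}$ as the sum of sibling linkages over all clusters at level $t$ and $L\prl{\treetop}$ as the tuple $\prl{f_{\treetop}\prl{1},\dots,f_{\treetop}\prl{\card{\indexset}-1}}$ ordered lexicographically, one notes that levels $\leq \depth{\treetop}\prl{P}$ are untouched by the move, while at level $\depth{\treetop}\prl{P}+1$ the contribution changes by exactly $\linkage\prl{\vectbf{x};B\cup C,A}-\linkage\prl{\vectbf{x};A\cup B,C}>0$; whatever happens at deeper levels is irrelevant to the lexicographic comparison. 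Hence $L$ strictly increases on every plateau step, no tree can recur at a fixed value of $H_{\vectbf{x},\linkage}$, and termination follows. No depth weights, multiset bookkeeping, or monotonicity inside the transplanted subtrees is required.
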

\begin{proof}
For a fixed finite index set $\indexset$, the number of non-degenerate hierarchies  in $\bintreetopspace_{\indexset}$ \refeqn{eq.numBinTree}  is finite. 
Hence, for the proof of theorem, we shall show that  the anytime  clustering procedure in \reftab{tab.OHC} can not yield any cycle in $\bintreetopspace_{\indexset}$.

Let $\treetop^k \in \bintreetopspace_{\indexset}$ denote a clustering hierarchy visited at $k$-th iteration of anytime clustering method, where $k \geq 0$.
Since $\treetop^k$ and $\treetop^{k+1}$ are NNI-adjacent, let $\prl{A^k,B^k,C^k}$ be the associated NNI-triplet (\reflem{lem.NNITriplet}) of the pair $\prl{\treetop^k, \treetop^{k+1}}$ satisfying   $A^k \cup B^k \in \cluster{\treetop^k} \setminus \cluster{\treetop^{k+1}}$ and $B^k \cup C^k \in \cluster{\treetop^{k+1}} \setminus \cluster{\treetop^k}$.
Further, recall from \reflem{lem.linkageOptimality} that for any NNI-reducible linkage $\linkage$,  $H_{\vectbf{x}; \linkage}\prl{ \treetop^{k+1}} - H_{\vectbf{x}; \linkage}\prl{ \treetop^{k}}\leq 0$. 

If  $H_{\vectbf{x}; \linkage}\prl{ \treetop^{k+1}} - H_{\vectbf{x}; \linkage}\prl{ \treetop^{k}} < 0$, it is clear that anytime clustering method never revisits any previously visited clustering hierarchy. 

Otherwise, $H_{\vectbf{x}; \linkage}\prl{ \treetop^{k+1}} = H_{\vectbf{x}; \linkage}\prl{ \treetop^{k}}$,  we have
{\small
\begin{align}
\hspace{-2mm}\linkage \!\prl{\!\vectbf{x}; \!A^k\!, \! B^k \!} \sqz{+} 
\linkage\!\prl{\! \vectbf{x}; \!A^k\sqz{\cup} B^k\!,\! C^k \!} 
&\sqz{=} 
\linkage\!\prl{\!\vectbf{x}; \! B^k \! , \! C^k \!} \sqz{+} 
\linkage\!\prl{\! \vectbf{x}; \! B^k \sqz{\cup} C^k\!, \! A^k \!} \!\!, \! \! \!\!\\
\linkage\prl{\vectbf{x}; \!A^k \!, \!B^k} &\sqz{>} \linkage\prl{\vectbf{x};\!B^k,\! C^k}, 
\end{align}
}
where the later is due to the anytime clustering rule in \reftab{tab.OHC}. 
Hence, the construction cost of (grand)parent $P^k = A^k \cup B^k \cup C^k$ increases after the NNI move,
\begin{align} \label{eq.GrandParentMonotone}
\linkage\prl{\vectbf{x}; A^k \cup B^k, C^k} < \linkage\prl{\vectbf{x}; B^k \cup C^k, A^k}.
\end{align}   

Now, let $\depth{\treetop}\prl{I}$ denote the level of cluster $I \in \cluster{\treetop}$ of $\treetop \in \bintreetopspace_{\indexset}$ which is equal to the number of ancestors of $I$ in $\treetop$,
\begin{align}
\depth{\treetop}\prl{I} \ldf \card{\crl{A \in \cluster{\treetop} \big| I \subseteq A}},
\end{align}  
and define $L\prl{\treetop}$ to be an ordered $\prl{\card{\indexset}-1}$-tuple of sum of linkages of $\treetop$ at each level,  
\begin{align}
L \prl{\treetop} &\ldf \prl{\big. f_{\treetop}\prl{t}}_{1\leq t \leq \card{\indexset} -1 }, \nonumber \\
 &= \prl{\big. f_{\treetop}\prl{1}, f_{\treetop}\prl{2}, \ldots, f_{\treetop}\prl{\card{\indexset}-1}},
\end{align}
where a binary hierarchy over leaf set $\indexset$ might have at most $\card{\indexset} -1$ levels, and 
\begin{align}
f_{\treetop}\prl{t} \ldf \frac{1}{2} \sum_{\substack{I \in \cluster{\treetop}\\ \depth{\treetop}\prl{I} = t}} \linkage\prl{\vectbf{x}; I, \compLCL{I}{\treetop}}.
\end{align}
Note that if there is no cluster at level $t$ of $\treetop$, then we set $f_{\treetop}\prl{t} = 0$.

Have $(\card{\indexset}-1)$-tuples of real numbers ordered lexicographically  according to the standard order of reals.
Then, since NNI transition from $\treetop^k$ to $\treetop^{k+1}$ might only change linkages between clusters below (grand)parent cluster $P^k = A^k \cup B^k \cup C^k$, using \refeqn{eq.GrandParentMonotone}, one can conclude  that
\begin{align}
L \prl{\treetop^k} < L \prl{\treetop^{k+1}}.
\end{align}
Thus, it is also impossible to visit the same clustering hierarchy at the same level of objective function $H_{\vectbf{x}, \linkage}$, which completes the proof. \hfill \qed

%
%
%
%
\end{proof}

Even though average linkage $\linkage_A$ \refeqn{eq.AverageLinkage} is not NNI-reducible, as shown in \refapp{app.AverageLinkageTermination}, anytime hierarchical clustering based on average linkage still has the finite time termination property.

\subsection{A Brief Discussion of Computational \\ Properties}
\label{sec.ComplexityAnalysis}

Complexity analysis of any recursive algorithm will necessarily engage  two logically independent questions:  (i) how many iterations are required to convergence; and (ii) what computational cost is incurred by application of the recursive function at each step along the way? 
Accordingly, in this section we address this pair of question in the context of the anytime hierarchical clustering algorithm of \reftab{tab.OHC}. 
Specifically we : (i) discuss (but defer to a subsequent paper a complete treatment of) the problem of determining bounds on the number of iterations of anytime clustering; and (ii) present explicit bounds on the computational cost of checking whether a cluster hierarchy violates local homogeneity at a given cluster (node) of tree or not.
Prior work on discriminative comparison of non-degenerate hierarchies \cite{ArslanEtAl_NNITechReport2013} and the results of experimental
evaluation in \refsec{sec.ExperimentalEvaluation} hint at a bound on the number of iterations (i) that is $\bigO{n^2}$ with the dataset cardinality, $n$. 
We leave a comprehensive detailed study of algorithmic complexity of anytime hierarchical clustering  to a future discussion of specific implementations.     
However, we still find it useful to give a brief idea of  the computational cost incurred by the determination of tree homogeneity with respect to a number of commonly used linkages.

A straightforward implementation to check (ii) local homogeneity  of a clustering hierarchy at any cluster with respect to any  linkage function in \refeqn{eq.Linkage}   generally has  time complexity of $\bigO{n^2}$  with the dataset size, $n$, with an exception that local homogeneity of a clustering tree relative to Ward's linkage can be computed in  linear, $\bigO{n}$, time.

Alternatively, following the CF(Clustering Feature) tree of BIRCH \cite{zhang_ramakrishnan_livny_SIGMOD1996}, a simple tree data structure can be used to store  sufficient statistics, such as cluster sizes, means and variances, of a clustering hierarchy associated with a dataset. 
Such a data structure can be constructed  in linear time, with the dataset size, using a post-order traversal of a clustering tree and the following recursion of cluster sizes, means and variances.
For any $\vectbf{x} \in \prl{\R^{m}}^{\indexset}$ and disjoint subsets $A,B \subseteq \indexset$ of a finite index set $\indexset$, the sufficient statistics of  $\vectbf{x}|A \cup B$ can be written in terms of the sufficient statistics of $\vectbf{x}|A$ and $\vectbf{x}|B$ as follows:\footnote{A slightly different form of  \refeqn{eq.SufficientStatisticRecursion}  is known as the additivity theorem of CF trees of \cite{zhang_ramakrishnan_livny_SIGMOD1996}.}
{\small
\begin{subequations} \label{eq.SufficientStatisticRecursion}
\begin{align} 
\card{A \cup B} &\sqz{=} \card{A} + \card{B},\\
\ctrd{\vectbf{x}|A\sqz{\cup} B} &\sqz{=} \scalebox{1}{$\frac{\card{A}}{\card{A}+\card{B}}$}\ctrd{\vectbf{x}|A} \sqz{+} \scalebox{1}{$\frac{\card{B}}{\card{A}+\card{B}}$}\ctrd{\vectbf{x}|B}\!, \\
\var{\vectbf{x}|A\sqz{\cup} B} &\sqz{=} \scalebox{1}{$\frac{\card{A}}{\card{A}\sqz{+}\card{B}}$} \var{\vectbf{x}|A} \sqz{+} \scalebox{1}{$\frac{\card{B}}{\card{A}\sqz{+}\card{B}}$} \var{\vectbf{x}|B} \nonumber \\
& \hspace{16mm} \sqz{+} \scalebox{1}{$\frac{\card{A}\card{B}}{\prl{\card{A}\sqz{+}\card{B}}^2}$} \!\norm{ \big. \ctrd{\vectbf{x}|A} \sqz{-} \ctrd{\vectbf{x}|B}}_2^2, 
\end{align}
\end{subequations}
}
where for any $I \subseteq \indexset\;$ $\ctrd{\vectbf{x}|I}$\refeqn{eq.centroid} and $\var{\vectbf{x}|I}$\refeqn{eq.variance} denote the centroid and variance of $\vectbf{x}|I$, respectively.  
Note that any singleton cluster $i \in \cluster{\treetop}$ has $\card{i} = 1$, $\ctrd{\vectbf{x}|i} = \vect{x}_i$ and $\var{\vectbf{x}|i} = 0$.
Also, note that after an NNI restructuring of a clustering tree, the data structure keeping cluster sizes, means and variances can be updated in constant time using \refeqn{eq.SufficientStatisticRecursion}. 
Therefore, given the sufficient statistics, local homogeneity of a clustering hierarchy at any cluster with respect to Ward's linkage can be determined in constant time.

To demonstrate another computationally efficient setting of anytime hierarchical clustering, consider the squared Euclidean distance as a dissimilarity measure, i.e. for any $\vect{x}, \vect{y} \in \R^m$
\begin{align} \label{eq.SquaredEuclidean}
\dist\prl{\vect{x}, \vect{y}} = \norm{\vect{x} - \vect{y}}_2^2.
\end{align}  
As shown in \refapp{app.SpecialAverageLinkage}, for any $\vectbf{x} \in \prl{\R^m}^{\indexset}$ and disjoint subsets $A,B \subseteq \indexset$, the average linkage $\linkage_{A}$ \refeqn{eq.AverageLinkage} between partial patterns $\vectbf{x}|A$ and $\vectbf{x}|B$, based on the squared Euclidean distance, can be rewritten in terms of sufficient statistics of $\vectbf{x}|A$ and $\vectbf{x}|B$ as \footnote{This is generally known as the ``bias-variance" decomposition of squared Euclidean distance \cite{hastie_tibshirani_friedman_2009}. }
\begin{align} 
\linkage_A\!\prl{\vectbf{x};\! A,\! B} \sqz{=} \var{\vectbf{x}|A} + \var{\vectbf{x}|B} + \norm{\ctrd{\vectbf{x}|A} - \ctrd{\vectbf{x}|B}}_2^2.  \label{eq.AverageLinkSimple}
\end{align}
Therefore, as in the case of Ward's linkage, given the sufficient statistics of a clustering hierarchy its local homogeneity at any cluster with respect to  average linkage with the squared Euclidean distance \refeqn{eq.SquaredEuclidean}   can be determined in constant time.

\begin{table}[tb]
\caption{Determining Local Homogeneity of a Clustering Hierarchy at any Cluster}
\centering
{\small
\begin{tabular}{|@{\hspace{0.25mm}}c@{\hspace{0.25mm}}|@{\hspace{0.25mm}}c@{\hspace{0.25mm}}|@{\hspace{0.25mm}}c@{\hspace{0.25mm}}|@{\hspace{0.25mm}}c@{\hspace{0.25mm}}|@{\hspace{0.25mm}}c@{\hspace{0.25mm}}|@{\hspace{0.25mm}}c@{\hspace{0.25mm}}|@{\hspace{0.25mm}}c@{\hspace{0.25mm}}|} 
\hline
\multirow{3}{*}{Linkage}& \multirow{3}{*}{Single} & \multirow{3}{*}{Complete} & \multirow{3}{*}{Average} & \multirow{3}{*}{Ward\footnotemark}  & \multirow{3}{*}{Minimax} & Average\addtocounter{footnote}{-1}\footnotemark \\
&&&&&& with\\
&&&&&& \refeqn{eq.SquaredEuclidean} \hspace{-2mm} or \hspace{-2mm}  \refeqn{eq.Cosine}\\
\hline
Complexity$\Big.$ & $\bigO{n^2}$ & $\bigO{n^2}$ & $\bigO{n^2}$  & $\bigO{1}$ & $\bigO{n^2}$ & $\bigO{1}$  \\
\hline 
\end{tabular}
\null \hfill {$n$: the number of data points \hspace{-4mm}}
}
\label{tab.Complexity}
\end{table}
\footnotetext{Assuming the availability of sufficient statistics.}

A similar computational improvement for average linkage is also possible with the cosine dissimilarity --- another commonly used dissimilarity, in information retrieval and text mining \cite{salton_buckley_IPM1988}: for any $\vect{x}, \vect{y} \in \R^m$,
\begin{align}\label{eq.Cosine}
\dist\prl{\vect{x}, \vect{y}} = 1 - \frac{\vect{x} \cdot \vect{y}}{\norm{\vect{x}}_2 \norm{\vect{y}}_2},
\end{align}    
where $\cdot$ denote the Euclidean dot product.
For any dataset of unit length vectors $\vectbf{x} \in \prl{\Sp^{m-1}}^{\indexset}$ and disjoint subsets $A,B \subseteq \indexset$, the average linkage $\linkage_A$ \refeqn{eq.AverageLinkage} between $\vectbf{x}|A$ and $\vectbf{x}|B$, based on the cosine dissimilarity, is given by
\begin{align}
\linkage_A\prl{\vectbf{x};A,B} = 1 - \ctrd{\vectbf{x}|A} \cdot \ctrd{\vectbf{x}|B}.
\end{align}   
\reftab{tab.Complexity} briefly summaries the discussion on computational complexity of the determination of local homogeneity of a clustering hierarchy at any cluster.

\subsection{Application: Incremental Clustering}
\label{sec.IncrementalClustering}

As an application of  anytime  clustering, given a choice of linkage $\linkage$, we now propose an incremental hierarchical clustering method consisting of the following steps:  (i) insert a new data point to existing clustering hierarchy based on a specific tree traversal and local homogeneity criterion as described in \reftab{tab.Insert}, and then (ii) apply anytime clustering of \reftab{tab.OHC} to obtain a  homogeneous clustering hierarchy of the updated data set with respect to $\linkage$. 

\begin{table}[h]
\caption{Incremental Hierarchical $\linkage$-Clustering: Data Insertion Using Local Homogeneity}
\label{tab.Insert}
\begin{tabular}{|p{0.45\textwidth}|}
\hline
\\[-3mm]

Let  $\treetop \in \bintreetopspace_{\indexset}$ be a clustering hierarchy associated with a set of data points $\vectbf{x} \in \prl{\R^m}^{\indexset}$ and $\linkage$ be a linkage function.
Let $i \not \in \indexset$ denote the label of a new data point $\vect{x}_i \in \R^m$ to be inserted, and $\hat{\indexset} = \indexset \cup \crl{i}$ and $\hat{\vectbf{x}} = \prl{\vect{x}_j}_{j \in \hat{\indexset}}$ be the updated index and data sets after data insertion, respectively.

To insert $\vect{x}_i$ into the existing clustering hierarchy $\treetop$ associated with $\vectbf{x}$:
\begin{enumerate}
\item Start with $K = \indexset$.
\item For $\crl{K_L, K_R} = \childCL{K,\treetop}$,
\begin{enumerate}
\item If {\small $\linkage\prl{\hat{\vectbf{x}};\!K_L,\! K_R} \sqz{\leq} \min \prl{\big.\linkage\prl{\hat{\vectbf{x}}; \!K_L,\! \crl{i}}\!,\linkage\prl{\hat{\vectbf{x}}; \!K_R,\! \crl{i}}} $}, then\addtocounter{footnote}{-5}\footnotemark\addtocounter{footnote}{5}  \; attach leaf $i$ as the sibling of $K$ in the new clustering tree $\hat{\treetop} \in \bintreetopspace_{\hat{\indexset}}$.
\item Otherwise, set {\small$K \sqz{=} \argmin_{D \in \childCL{K,\treetop}} \linkage \prl{\hat{\vectbf{x}};\! D\!, \crl{i}\!}$}, and go to step 2.  
\end{enumerate} 
\vspace{-4mm}
\end{enumerate}
\\
\hline
\end{tabular}
\end{table}

%
Note that, given the linkage values of a clustering hierarchy, for any linkage function satisfying the recurrence formula \refeqn{eq.Recurrence} of Lance and Williams \cite{lance_williams_CJ1967} a data insertion, described in \reftab{tab.Insert}, can be performed in linear, $\bigO{n}$, time with the dataset size, $n$. 
This follows because the linkage distance between the new data point and clusters of an existing hierarchy can be efficiently computed in linear time using a post-order traversal of the clustering hierarchy and \refeqn{eq.Recurrence}.

\section{Experimental Evaluation}
\label{sec.ExperimentalEvaluation}

This section presents a preliminary comparative numerical study of three different  hierarchical clustering methods using both simulated and real datasets. 
We compare: (a)  the standard agglomerative batch method ($\HAC_{\linkage}$, \reftab{tab.HAC}); with (b) the new anytime method ($\AHC_{\linkage}$, \reftab{tab.OHC}) and (c) its specialization to the incremental ``data insertion" problem setting ($\IHC_{\linkage}$,\reftab{tab.Insert}). 


\subsection{Datasets}
\label{sec.Dataset}

Very high dimensional and sparse data sets generally have simple structure and, specifically, are known to tend toward ultrametricity\footnote{A metric $\dist:X \times X \rightarrow \R_{\geq 0}$ is said to be a \emph{ultrametric} if it satisfies the strong triangle inequality, i.e. for any $\vect{x}, \vect{y}, \vect{z} \in X$, $\dist\prl{\vect{x}, \vect{y}} \leq \max\prl{\dist\prl{\vect{x}, \vect{z}}, \dist\prl{\vect{z}, \vect{y}}}$.} with the increasing dimensionality and/or sparsity\cite{murtagh_JC2009}.
In this context, the fact that a monotone clustering hierarchy associated with a dataset defines an ultrametric between data points (as we will briefly review in the next section) \cite{carlsson_memoli_jmlr2010}, motivates the intuition that hierarchical methods may enjoy particular efficacy in  clustering problems involving high dimensional and sparse data.
In the following preliminary study we will compare the results of hierarchical clustering on a low dimensional synthetic dataset and a higher dimensional dataset of physical origin. 
In both cases we will use a validation measure (introduced below) that quantifies the loss of information incurred by  approximating the underlying pairwise dissimilarities between points with the coarsened measure arising from the ultrametric induced by the resulting cluster hierarchy.

A challenging dataset for any hierarchical clustering method consists of uniformly distributed low dimensional data points. 
We generate our synthetic data by uniformly sampling the planar unit cell,
$\brl{0,1} \times \brl{0,1}$, thereby generating similar populations of varied cardinality. 
In contrast, for real data points, we use the MNIST collection of handwritten digits, where each data sample is a black and white
$28\times28$ image of a human produced numeral \cite{lecunEtAl1998}.   
We generate test datasets of varied size by randomly sampling an equal number of images for each digit in  the MNIST dataset.

\subsection{Validation Measure}
\label{sec.Validation}

\begin{figure}[tb] 
\centering
\hspace{-0.5mm}\includegraphics[width=0.24\textwidth]{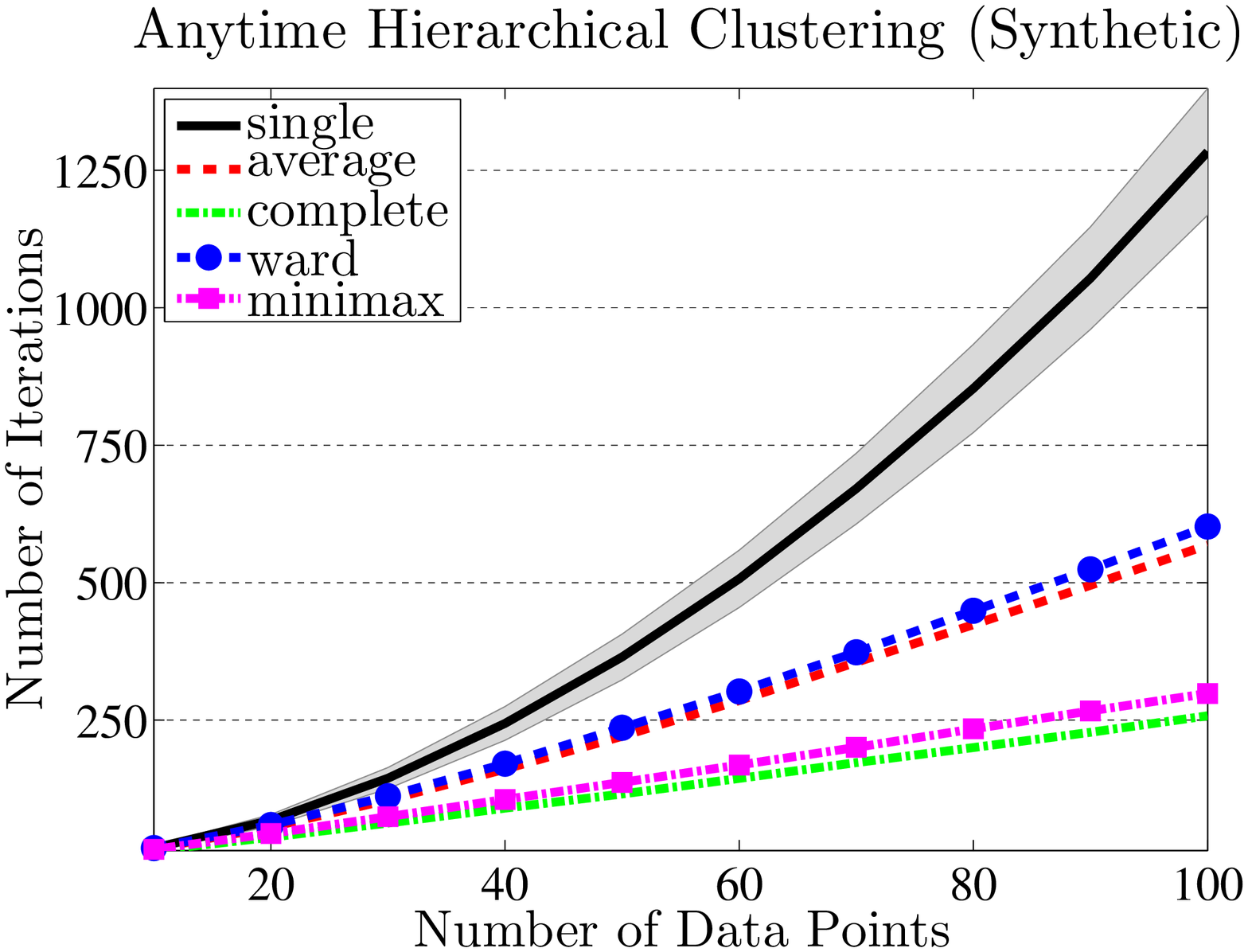}  \hspace{-2.5mm}
\includegraphics[width=0.24\textwidth]{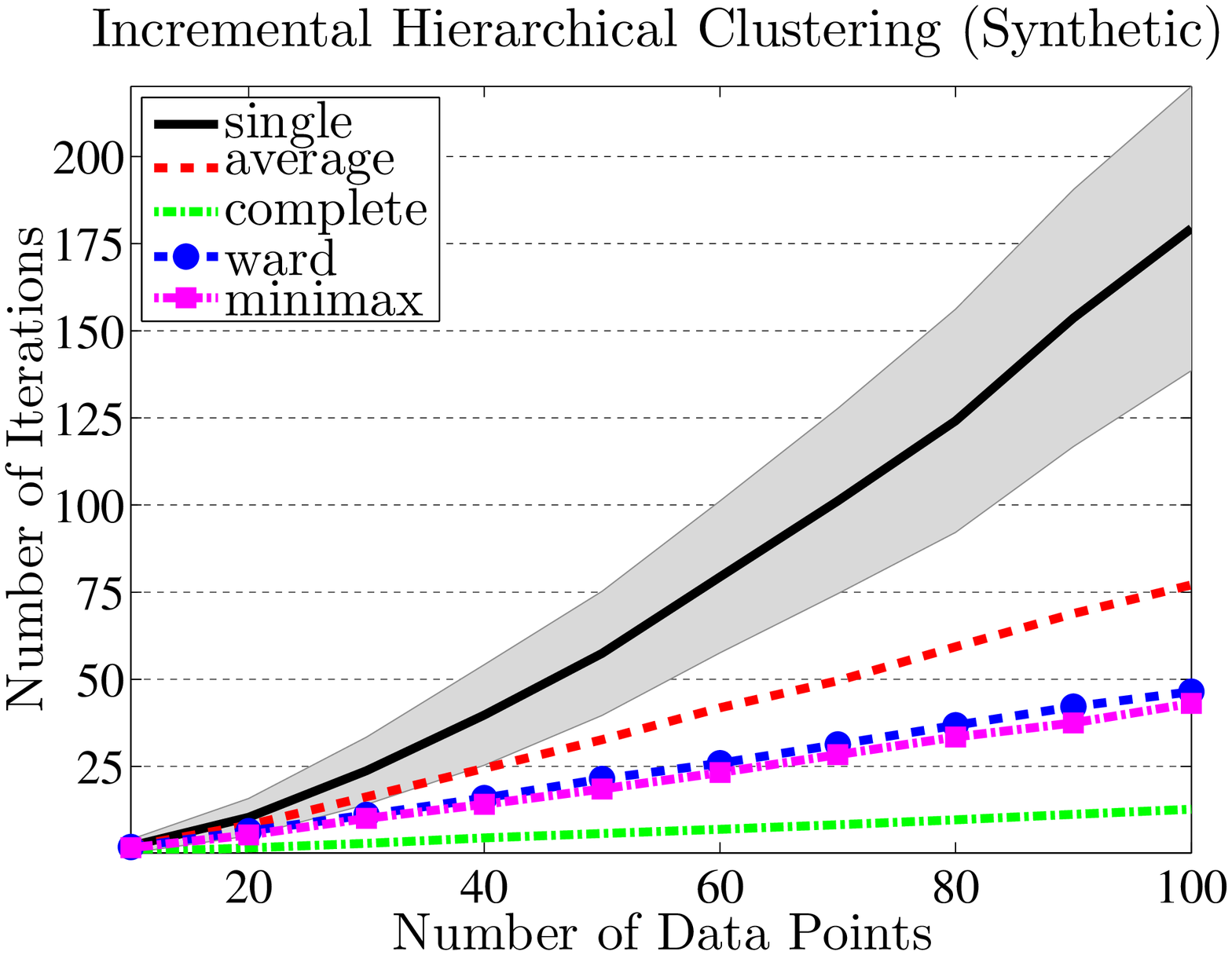} 
\caption[The LOF caption]{Average number of iterations of anytime (left) and incremental (right) hierarchical clusterings of simulated data points.
Shaded regions illustrate the sample variance for the single linkage.\footnotemark} 
\label{fig.NumIter_Synthetic}
\end{figure}
\footnotetext{To prevent cluttered figures and give some idea of how the sample variance changes with the cardinality of dataset, we only include the results for single linkage .}

To evaluate the accuracy and effectiveness of different hierarchical clustering methods, we use the cophenetic correlation coefficient --- a widely accepted validation criterion  that measures how well a clustering hierarchy preserves the underlying pairwise dissimilarities between points in a dataset\cite{sokal_rohlf_T1962}. 
In order to  interpret this criterion we find it helpful to briefly review the manner in which a monotone hierarchy induces an ultrametric between points in a dataset \cite{carlsson_memoli_jmlr2010}.

For any set of data points $\vectbf{x} \in \prl{\R^m}^{\indexset}$ and a clustering hierarchy $\treetop \in \bintreetopspace_{\indexset}$ associated with $\vectbf{x}$ and a linkage $\linkage$, let $\mat{D}\prl{\vectbf{x}} \in \R^{\indexset} \times \R^{\indexset}$ and $\mat{U}_{\treetop}\prl{\vectbf{x}}\in \R^{\indexset} \times \R^{\indexset}$ denote the original distance matrix of $\vectbf{x}$  and induced ultrametric of $\treetop$, respectively.
Namely, for any $I \in \cluster{\treetop}$, $i \in I$ and $j \in \compLCL{I}{\treetop}$ 
\begin{align}
\mat{D}\prl{\vectbf{x}}_{ij} &= \norm{\vect{x}_i - \vect{x}_j}_2, \\
\mat{U}_{\treetop}\prl{\vectbf{x}}_{ij} &=  \linkage\prl{\vectbf{x}; I, \compLCL{I}{\treetop}},
\end{align} 
and for any $i \in \indexset$ set $\mat{D}\prl{\vectbf{x}}_{ii} = \mat{U}\prl{\treetop}_{ii} = 0$.  
The cophenetic correlation coefficient between $\mat{D}\prl{\vectbf{x}}$ and $\mat{U}_{\treetop}\prl{\vectbf{x}}$ is defined as
\begin{align}\label{eq.Cophenetic}
\rho_c = \frac{\sum_{i,j \in \indexset}\prl{\mat{D}_{ij} -\overline{\mat{D}}}\prl{\mat{U}_{ij} - \overline{\mat{U}}} }{\sqrt{\sum_{i,j \in \indexset} \prl{\mat{D}_{ij} -\overline{\mat{D}}}^2}\sqrt{\sum_{i,j \in \indexset} \prl{\mat{U}_{ij} -\overline{\mat{U}}}^2}},
\end{align}
where $\overline{\mat{D}}$ and $\overline{\mat{U}}$ denote the average of the elements of $\mat{D}$ and $\mat{U}$, respectively, i.e. $\overline{\mat{D}} = \frac{1}{\card{\indexset}^2} \sum_{i,j \in \indexset} \mat{D}_{ij}$.

Finally, it is useful to note that  Ward's linkage $\linkage_W$ \refeqn{eq.WardLinkage} quantifies the change in the sum of squared error after merging clusters \cite{ward_JASS1963}, and so $\linkage_W$  and the standard Euclidean norm do not have the same units. 
To resolve this unit mismatch, for any clustering hierarchy $\treetop$ resulting from hierarchical clustering of $\vectbf{x}$ based on Ward's linkage $\linkage_{W}$ we find it convenient to use average linkage $\linkage_{A}$ \refeqn{eq.AverageLinkage} to define the induced dissimilarity $\mat{U}\prl{\treetop}$ of $\treetop$.

\subsection{Preliminary Numerical Results}
\label{sec.Result}

\begin{figure}[tb] 
\centering
\hspace{-0.5mm}\includegraphics[width=0.24\textwidth]{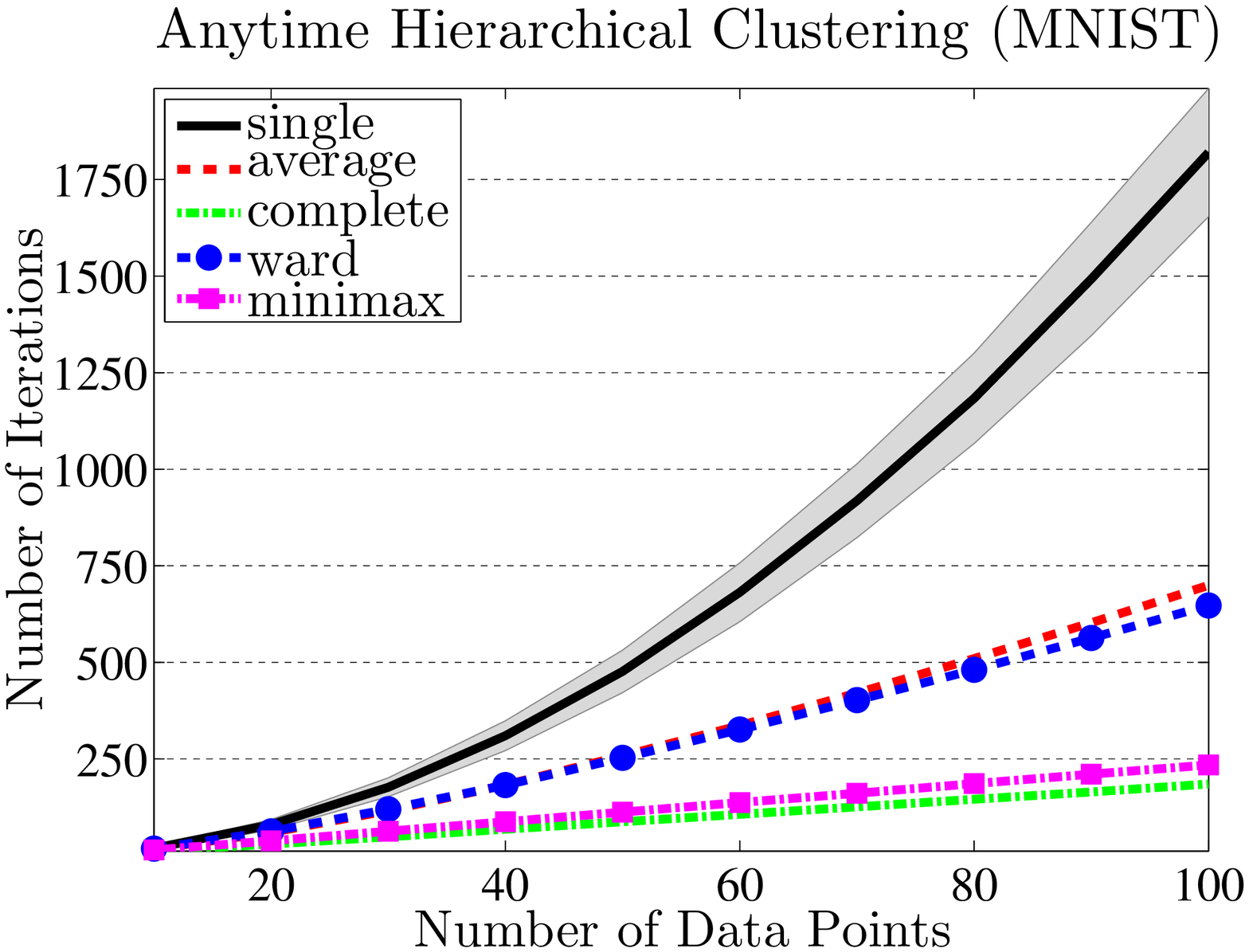}  \hspace{-2.5mm}
\includegraphics[width=0.24\textwidth]{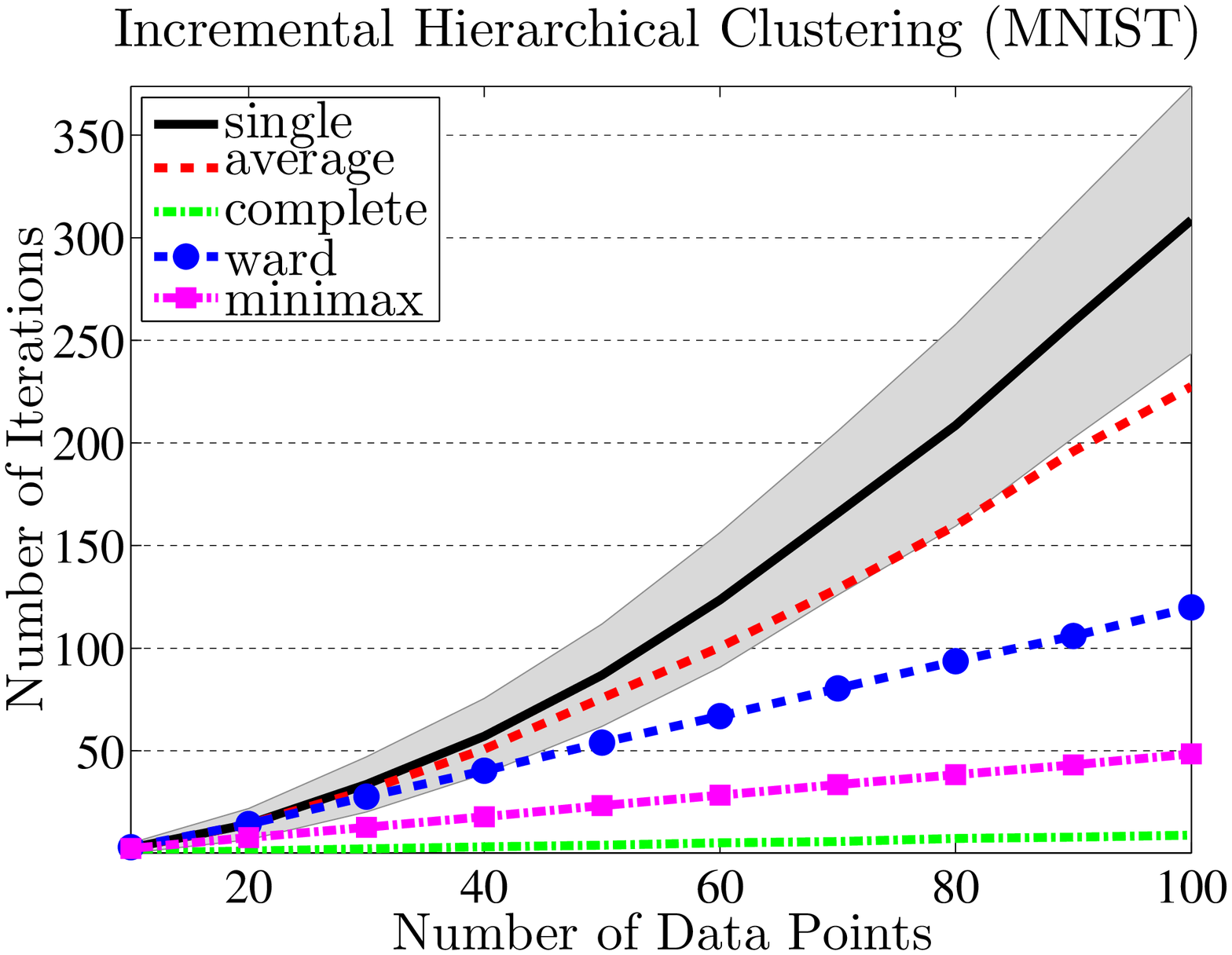} 
\caption[The LOF caption]{Average number of iterations of anytime (left) and incremental (right) hierarchical clusterings for the MNIST dataset. Shaded regions illustrate the sample variance for the single linkage.\addtocounter{footnote}{-1}\footnotemark} 
\label{fig.NumIter_MNIST}
\end{figure}

Using an empirical evaluation of anytime and incremental hierarchical clustering  methods we aim to statistically explore: (1) the number of iterations of anytime and incremental clusterings, and (2) their effectiveness compared to the traditional agglomerative clustering methods.

As expected, the number of iterations to homogeneous termination of any anytime clustering depends strongly on the initial conditions (i.e, the initial pair of dataset and tree).
To challenge the proposed clustering method,  we always start anytime clustering of a dataset at  a random initial clustering hierarchy  uniformly sampled from the space of non-degenerate hierarchies  \cite{semple_steel_2003}.
To give some preliminary idea of performance as a function of data size we run the various methods on datasets of cardinality $10, 20, \ldots, 100,$ and report statistics from the results of $1000$ different randomly selected pairings of initial data set and tree using both synthetic and real data collections generated as described in \refsec{sec.Dataset}.

\begin{figure*}[thb]
\centering
\includegraphics[width=0.3\textwidth]{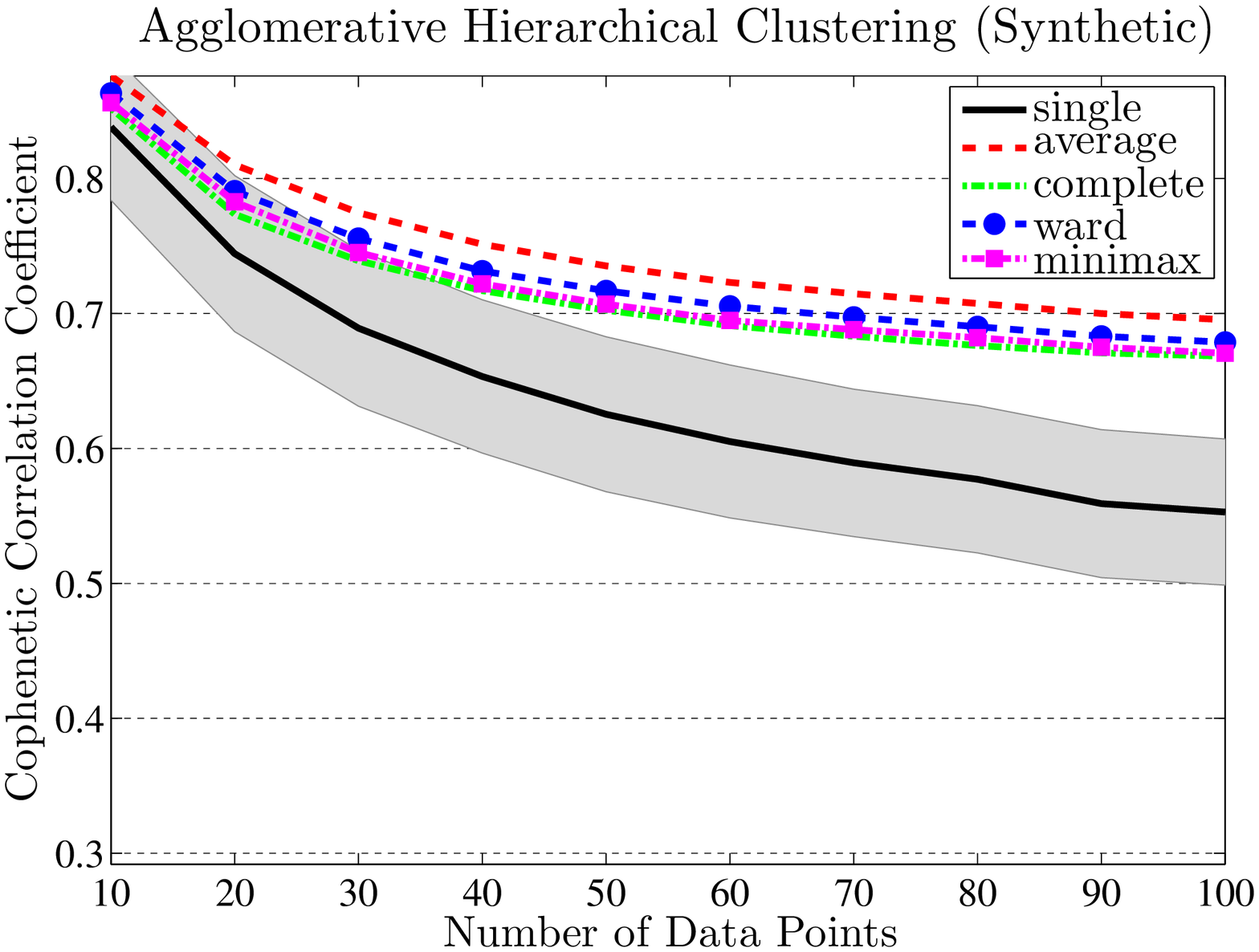} \hspace{2mm}
\includegraphics[width=0.3\textwidth]{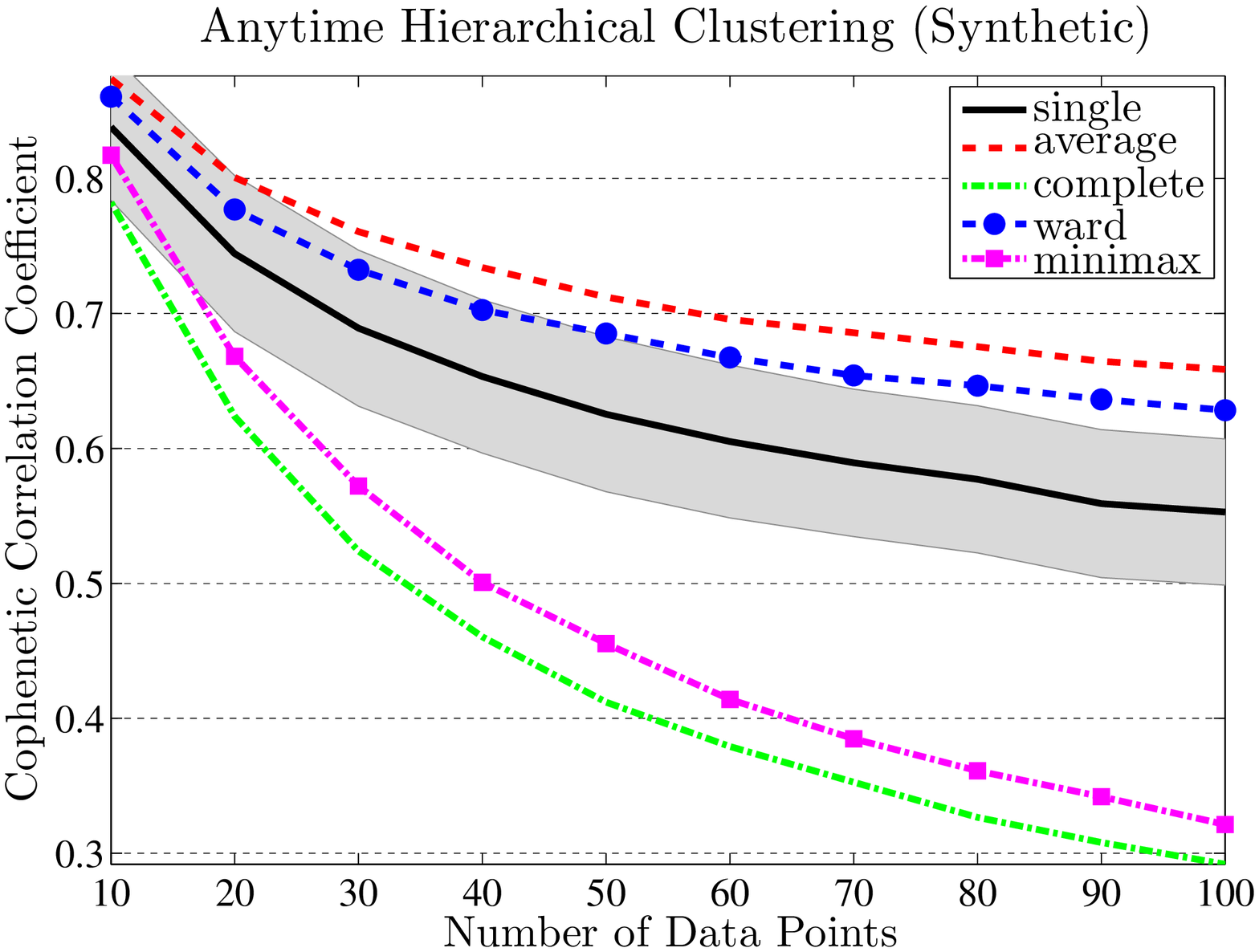} \hspace{2mm}
\includegraphics[width=0.3\textwidth]{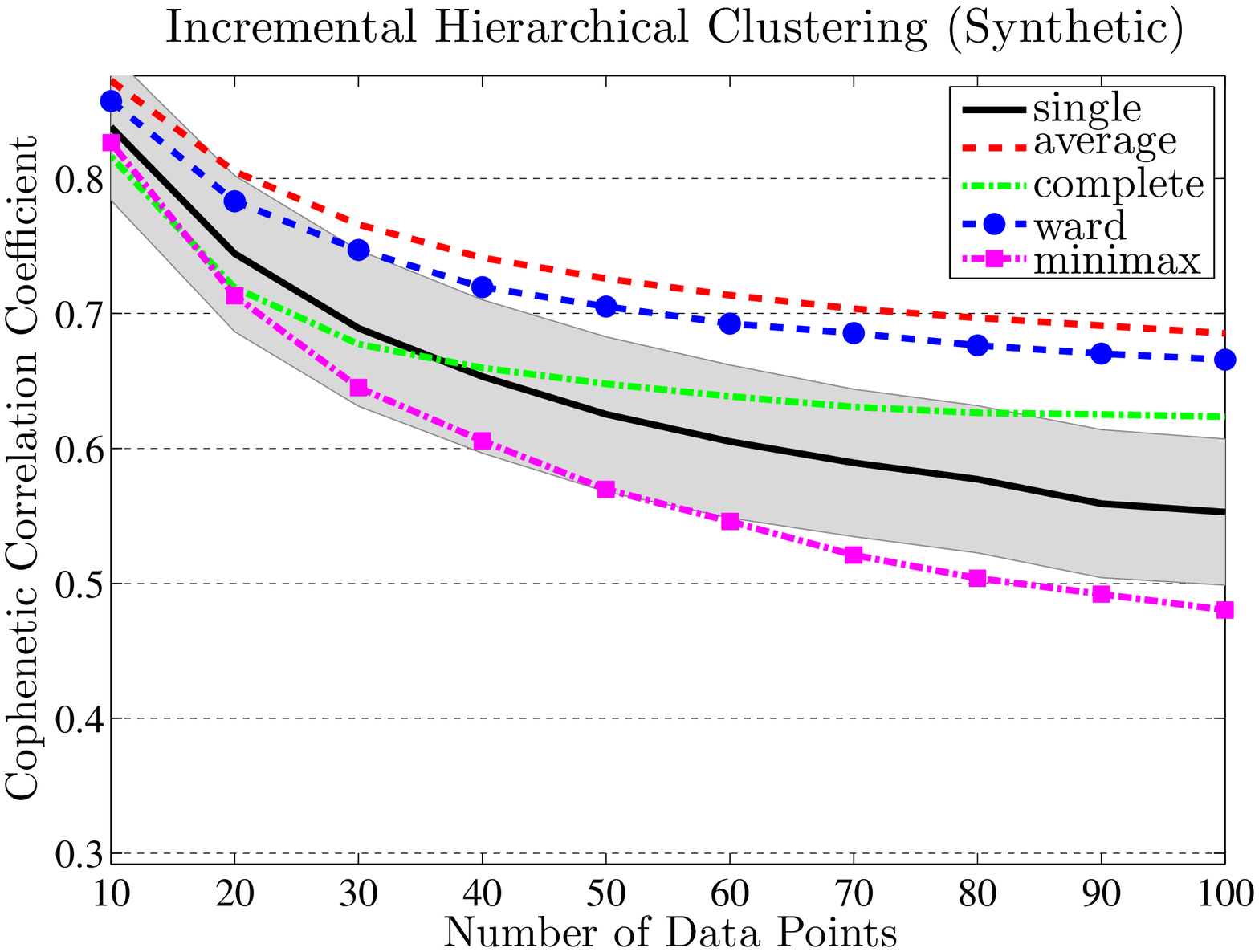} 
\caption[The LOF caption]{Average cophenetic correlation coefficient for agglomerative (left), anytime (middle) and incremental (right) hierarchical clusterings of simulated data points. Shaded regions illustrate the sample variance for the single linkage.\addtocounter{footnote}{-1}\footnotemark}
\label{fig.CCC_Synthetic}
\end{figure*}

\begin{figure*}[tbh]
\centering
\includegraphics[width=0.3\textwidth]{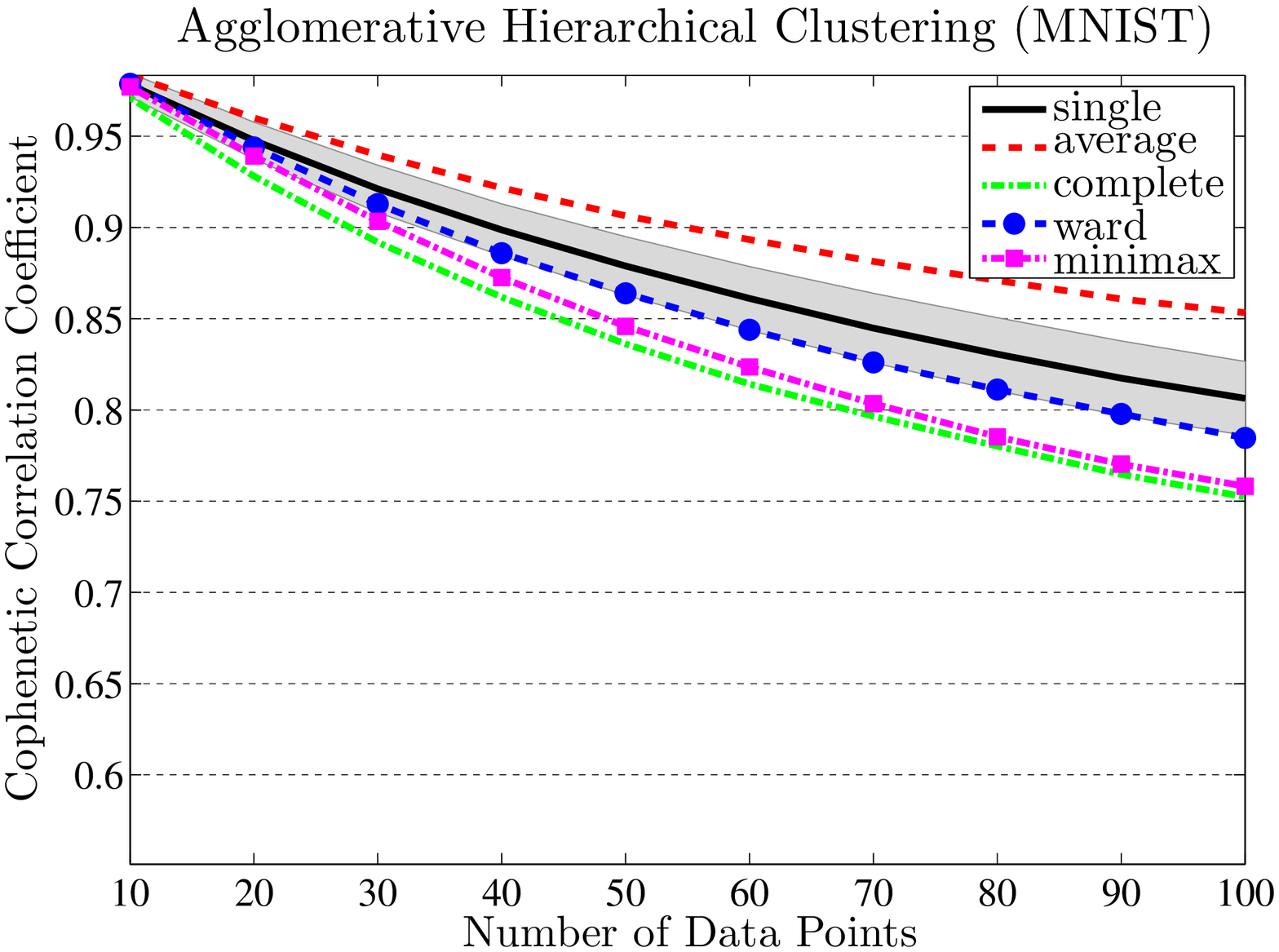} \hspace{2mm}
\includegraphics[width=0.3\textwidth]{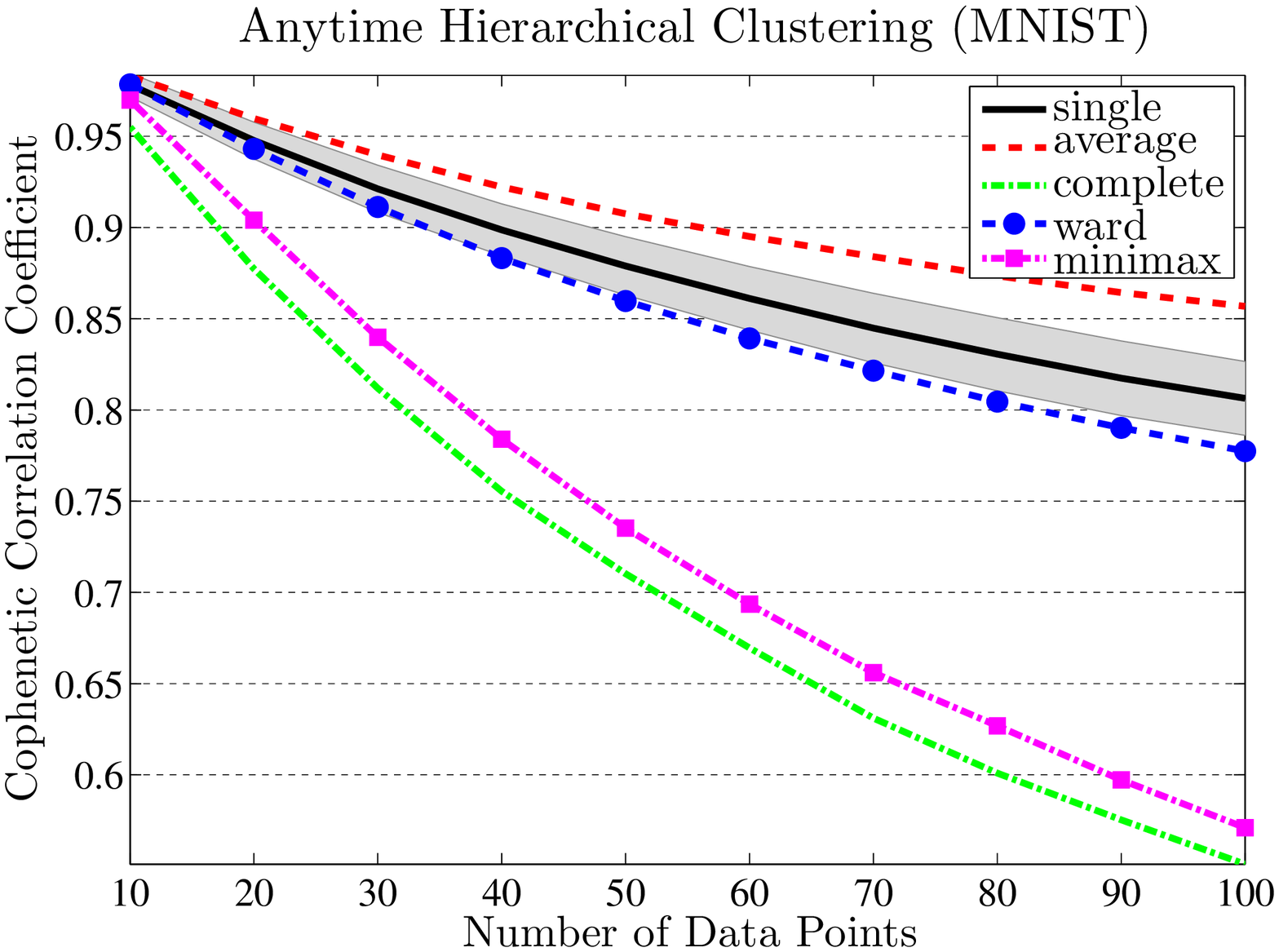} \hspace{2mm}
\includegraphics[width=0.3\textwidth]{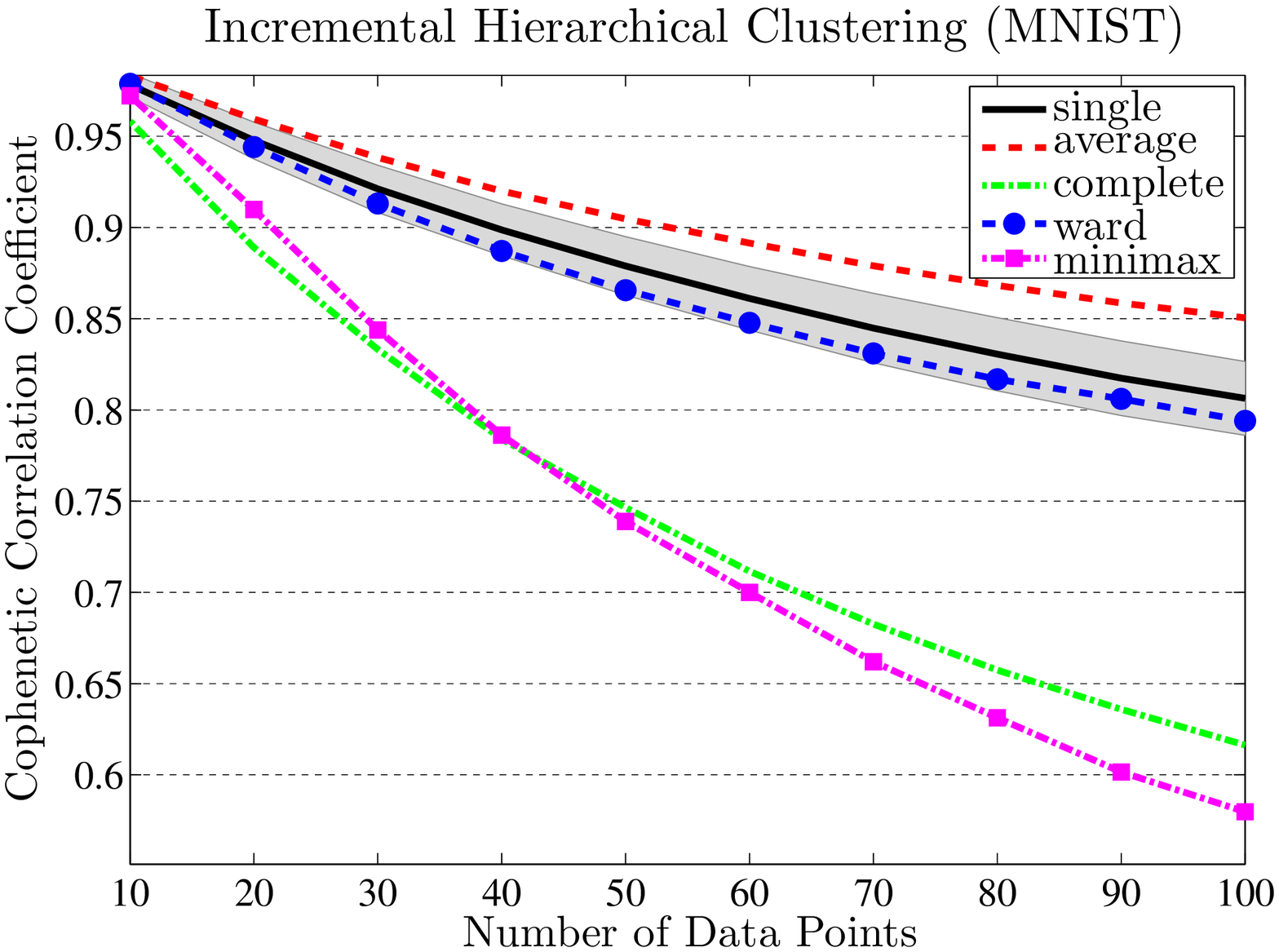} 
\caption[The LOF caption]{Average cophenetic correlation coefficient for agglomerative (left), anytime (middle) and incremental (right) hierarchical clusterings for the MNIST dataset. Shaded regions illustrate the sample variance for the single linkage.\addtocounter{footnote}{-1}\footnotemark}
\label{fig.CCC_MNIST}
\end{figure*}

For each linkage function discussed in \refsec{sec.linkage}, \reffig{fig.NumIter_Synthetic} and \reffig{fig.NumIter_MNIST} present the average number of iterations of anytime and incremental clusterings versus the dataset size. 
Regardless of  linkage and size, incremental clustering generally requires an order of magnitude fewer iterations than does anytime clustering. 
This is a consequence of our experimental design whereby anytime clustering is always initialized from a random clustering tree
while incremental clustering takes the advantage of local homogeneity
for effective insertion of a new datum into an existing clustering
tree. 
The next clearest pattern that emerges from these figures is that the number of iterations for both anytime and incremental clustering methods seems to grow quite differently for single linkage (where it appears quadratic in the size of the dataset) than for other familiar linkages (relative to which  these preliminary statistics are not inconsistent with linear growth) - but more exploration with larger cardinality datasets will be required before more specific conjectures are possible.

\reffig{fig.CCC_Synthetic} and \reffig{fig.CCC_MNIST} illustrate how cophenetic correlation coefficient \refeqn{eq.Cophenetic} changes with  the dataset size for agglomerative, anytime and incremental hierarchical clustering methods.
Recall from \refthm{thm.SingleLinkage} that a clustering hierarchy resulting from agglomerative single linkage clustering  of a dataset is uniquely characterised by its homogeneity relative to single linkage.
Hence, it must be the case (as is indeed reflected in these figures) that the clustering performance of single linkage clustering is the same for all agglomerative, anytime and incremental methods.
Here, assuming the results of agglomerative clustering as the ground truth,  anytime  and incremental clusterings using complete and minimax linkages is observed to perform relatively poorly, which is probably due to their overestimation of cluster dissimilarities. 
On the other hand, for average and Ward's linkages anytime and incremental hierarchical clusterings   perform as well as  agglomerative clustering. 
Further, as expected,  incremental clustering generally performs better than anytime clustering since  it  uses local  homogeneity to properly insert each new data point and calls anytime clustering  starting at a clustering tree which is far better than a random hierarchy.
Finally, one can notice that the clustering performance of all hierarchical methods is better on real datasets than synthetic datasets, which is likely due to increased dimensionality and sparsity of data as discussed in \refsec{sec.Dataset}.

\section{Conclusions}
\label{sec.Conclusion}

In this paper, we introduce a new homogeneity criterion (\refdef{def.LocalStructuralOptimality}), for a clustering tree associated with a data set applicable to a reasonably broad subclass of the familiar  linkage functions.
We  show that homogeneity is a characteristic property of trees resulting from any such standard (linkage based) hierarchical clustering methods (\refprop{prop.HAC2LSO}). 
In particular, homogeneity uniquely characterizes the single linkage clustering tree of a data set (\refthm{thm.SingleLinkage}).

We propose an anytime hierarchical clustering method in \reftab{tab.OHC} that iteratively transforms any initial clustering hierarchy into a homogeneous clustering tree of a dataset relative to a user-specified linkage function. 
For the subclass of linkages (specified in \refdef{def.Contraction} ---  including single, complete, minimax and Ward's linkages) we demonstrate that this iterative clustering procedure must terminate in finite time (\refthm{thm.Convergence}). 
Finally , we discuss certain settings for computationally efficient anytime clustering and describe an incremental hierarchical clustering method based on local homogeneity of cluster trees and anytime clustering.

The ``anytime" nature of our  method enables
users to choose between accuracy and efficiency. 
In contrast to batch methods, any intermediate stage of anytime
hierarchical clustering returns a valid clustering tree and
incrementally improves the homogeneity at each iteration.
Thus, at any time a user can stop clustering and continue
it later with or without updating the data set. 
Further, since our method is based on local tree  restructuring (the familiar NNI-walk, \refdef{def:NNIMove} \cite{robinson_jct1971,moore_goodman_barnabas_jtb1973}), it provides an opportunity for distributed/parallel implementation and reactive tracking.  
The experimental evaluation of sample anytime and incremental hierarchical clustering approaches suggested by these new ideas suggests their value relative to the standard ``batch"  methods.  

Work is presently in progress to  establish bounds on the number of iterations of anytime and incremental hierarchical clustering methods, and to develop specific implementations for efficient computation of anytime clustering. 
In the longer term, we believe these ideas will extend to a randomized algorithm for anytime single linkage clustering as well as to settings where simultaneous distance metric learning must take place in parallel with the hierarchical clustering process.

\section{Acknowledgement}
\label{sec.Acknowledgement}

This work was funded in part by the Air Force Office of Science Research under the MURI FA9550-10-1-0567.
%
\bibliographystyle{abbrv}
\bibliography{hierclustering}  
%
\appendix

%
%
%

\section{Sum of Ward's Linkages \& Sum of Squared Error}
\label{app.WardLinkage}

Although sums of linkage values $H_{\vectbf{x}, \linkage}$ \refeqn{eq.totallinkagecost} of distinct clustering hierarchies associated with a set of data points $\vectbf{x} \in \prl{\R^m}^{\indexset}$ and a linkage function $\linkage$  generally differ, $H_{\vectbf{x}, \linkage_{W}}$ is constant for Ward's linkage $\linkage_W$ \refeqn{eq.WardLinkage}:

\begin{lemma}
The sum of linkage values $H_{\vectbf{x}, \linkage_{W}}$\refeqn{eq.totallinkagecost} of any clustering hierarchy associated with a data set $\vectbf{x} \in \prl{\R^m}^{\indexset}$  and Ward's linkage $\linkage_{W}$ \refeqn{eq.WardLinkage} is constant and equal to the sum of squared errors of $\vectbf{x}$, i.e. for any $\treetop \in \bintreetopspace_{\indexset}$
\begin{align}
H_{\vectbf{x}, \linkage_{W}} \prl{\treetop} = \SSE{\vectbf{x}},
\end{align}
where
\begin{align}
\SSE{\vectbf{x}} \ldf \sum_{i \in \indexset} \norm{\vect{x}_i - \ctrd{\vectbf{x}|\indexset}}_2^2.
\end{align}
\end{lemma}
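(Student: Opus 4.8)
\emph{Proof plan.} The plan is to obtain the identity from the classical parallel-axis (``bias--variance'') decomposition of the sum of squared errors and then telescope it along the edges of $\treetop$; nothing about $\treetop$ beyond its being a binary tree will be used. Extend the notation of the statement by setting $\SSE{\vectbf{x}|I} \ldf \sum_{i \in I}\norm{\vect{x}_i - \ctrd{\vectbf{x}|I}}_2^2 = \card{I}\,\var{\vectbf{x}|I}$ for any $I \subseteq \indexset$, so that $\SSE{\vectbf{x}} = \SSE{\vectbf{x}|\indexset}$. First I would record the one-step decomposition: for disjoint $A,B \subseteq \indexset$,
\[
\SSE{\vectbf{x}|A \cup B} = \SSE{\vectbf{x}|A} + \SSE{\vectbf{x}|B} + \linkage_{W}\prl{\vectbf{x}; A, B},
\]
which is the variance recursion \refeqn{eq.SufficientStatisticRecursion} multiplied through by $\card{A\cup B} = \card{A}+\card{B}$ together with the definition \refeqn{eq.WardLinkage} of $\linkage_{W}$ (equivalently, a direct expansion of $\sum_{i\in A\cup B}\norm{\vect{x}_i - \ctrd{\vectbf{x}|A\cup B}}_2^2$). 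This is the only genuine computation and it is routine.

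Next I would rewrite $H_{\vectbf{x},\linkage_{W}}\prl{\treetop}$ as a sum over interior nodes. For every non-root cluster $I$, $\complementLCL{I} = \parentCL{I,\treetop}\setminus I$ is the sibling of $I$, while $\complementLCL{\indexset} = \emptyset$ and $\linkage_{W}\prl{\vectbf{x};\indexset,\emptyset} = 0$. Since $\treetop$ is binary, each interior (non-leaf) cluster $P$ has exactly two children $A_P$ and $B_P$, and the summands of \refeqn{eq.totallinkagecost} indexed by $I = A_P$ and $I = B_P$ are $\linkage_{W}\prl{\vectbf{x};A_P,B_P}$ and $\linkage_{W}\prl{\vectbf{x};B_P,A_P}$, equal by the symmetry of $\linkage_{W}$. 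Because every non-root cluster is the child of a unique interior node, the pairs $\crl{A_P,B_P}$ (over interior $P$) enumerate the non-root clusters exactly once, so
\[
H_{\vectbf{x},\linkage_{W}}\prl{\treetop} = \frac{1}{2}\sum_{I \in \cluster{\treetop}}\linkage_{W}\prl{\vectbf{x}; I, \complementLCL{I}} = \sum_{\text{interior } P}\linkage_{W}\prl{\vectbf{x}; A_P, B_P}.
\]

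Then I would telescope: substituting the one-step decomposition and using the same enumeration of non-root clusters,
\[
H_{\vectbf{x},\linkage_{W}}\prl{\treetop} = \sum_{\text{interior } P}\SSE{\vectbf{x}|P} - \sum_{I \neq \indexset}\SSE{\vectbf{x}|I} = \SSE{\vectbf{x}|\indexset} - \sum_{i \in \indexset}\SSE{\vectbf{x}|\crl{i}}.
\]
The cancellation holds because the interior clusters are all clusters except the leaves and the non-root clusters are all clusters except $\indexset$, so the two sums agree on their common internal, non-root part, leaving only the root term minus the leaf terms; and $\SSE{\vectbf{x}|\crl{i}} = 0$ for each singleton. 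Hence $H_{\vectbf{x},\linkage_{W}}\prl{\treetop} = \SSE{\vectbf{x}|\indexset} = \SSE{\vectbf{x}}$, which does not depend on $\treetop$.

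The step I expect to need the most care is precisely the bookkeeping in the telescoping equality --- being exact that the index sets ``interior clusters'' and ``non-root clusters'' differ by $\crl{\indexset}$ on one side and by the leaf set on the other, so that everything but the root- and leaf-terms cancels. As an independent check I would also prove the statement by induction on $\card{\indexset}$: the case $\card{\indexset} = 1$ is immediate, and for the inductive step one applies the one-step decomposition at the root, $\indexset = A \cup B$ with $\crl{A,B} = \childCL{\indexset,\treetop}$, and invokes the inductive hypothesis on the subtrees rooted at $A$ and $B$.
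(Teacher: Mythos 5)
Your proof is correct and rests on exactly the same key identity as the paper's, namely $\linkage_{W}\prl{\vectbf{x};A,B} = \SSE{\vectbf{x}|A\cup B} - \SSE{\vectbf{x}|A} - \SSE{\vectbf{x}|B}$, i.e.\ \refeqn{eq.WardLinkageSSE}; the paper merely packages the telescoping as an induction on the root split (precisely the variant you sketch as your ``independent check''), so the two arguments are essentially identical. Your bookkeeping of interior versus non-root clusters in the telescoped sum is accurate, including the treatment of the root term via $\linkage_{W}\prl{\vectbf{x};\indexset,\emptyset}=0$ and the vanishing singleton terms.
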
 
\begin{proof}
Recall from \cite{ward_JASS1963,jain_dubes_1988} that Ward's linkage quantifies the change in the sum of squared errors after merging a group of data points, i.e. for any $\vectbf{x} \in \prl{\R^{m}}^{\indexset}$ and disjoint subsets $A,B \subseteq \indexset$
\begin{align} \label{eq.WardLinkageSSE}
\linkage_{W}\prl{\vectbf{x}; A,B} = \SSE{\vectbf{x}|A \sqz{\cup} B} \sqz{-} \SSE{\vectbf{x}|A} \sqz{-} \SSE{\vectbf{x}|B}\!.\!\!
\end{align}

Let $\treetop \in \bintreetopspace_{\indexset}$ be a binary hierarchy with the root split $\crl{\indexset_L, \indexset_R} = \childCL{\indexset,\treetop}$.
Proof by induction. 
\begin{itemize}
\item (Base Case) if $\card{\indexset} = 2$, then there is only one clustering hierarchy $\treetop \in \bintreetopspace_{\brl{2}}$, i.e. $\card{\bintreetopspace_{\brl{2}}} = 1$. 
Note that $\treetop$ only has one Ward's linkage joining two data points of $\vectbf{x} = \prl{\vect{x}_1, \vect{x}_2}$ whose value equals to the sum of squared errors of $\vectbf{x}$,
\begin{align}
H_{\vectbf{x}, \linkage_{W}}\prl{\treetop} 
&= \linkage_W\prl{\vectbf{x}; \crl{1}, \crl{2} } 
= \frac{1}{2} \norm{\vect{x}_1 - \vect{x}_2}_2^2, \\ 
&= \!\!\sum_{i \in \crl{1,2}} \norm{\vect{x}_i - \ctrd{\vectbf{x}|\crl{1,2}}}_2^2 = \SSE{\vectbf{x}}\!.\!\! 
\end{align}
\item (Induction) Let $\treetop_L$ and $\treetop_R$ denote the subtrees of $\treetop$ rooted at $\indexset_L$ and $\indexset_R$, respectively. Suppose that
\begin{align}
H_{\vectbf{x}|\indexset_L, \linkage_{W}}\prl{\treetop_L} = \SSE{\vectbf{x}|\indexset_L},\\
H_{\vectbf{x}|\indexset_R, \linkage_{W}}\prl{\treetop_R} = \SSE{\vectbf{x}|\indexset_R}.
\end{align}
Note that if any of subtrees only has one leaf, e.g. $\card{\indexset_L} = 1$ , then we set the associated sum of linkage values to zero, $H_{\vectbf{x}|\indexset_L, \linkage_{W}}\prl{\treetop_L} = 0$.
 
Hence, using \refeqn{eq.WardLinkageSSE}, one can obtain the result as follows:
\begin{align}
\hspace{-8.5mm}H_{\vectbf{x}, \linkage_{W}}\!\prl{\treetop} &\sqz{=} H_{\vectbf{x}|\indexset_L\!, \linkage_W}\!\prl{\treetop_L} \sqz{+} H_{\vectbf{x}|\indexset_R\!, \linkage_W}\!\prl{\treetop_R} \sqz{+} \linkage_{W}\!\prl{\vectbf{x};\! \indexset_L,\! \indexset_R}\!,\!\!\\
&\sqz{=} \SSE{\vectbf{x}|\indexset_L} \sqz{+} \SSE{\vectbf{x}|\indexset_R} +  \linkage_{W}\!\prl{\vectbf{x};\! \indexset_L,\! \indexset_R}\!,\!\!\\
&\sqz{=} \SSE{\vectbf{x} }.
\end{align}
\end{itemize}
\end{proof}

\section{Termination Analysis for \\ Average Linkage}
\label{app.AverageLinkageTermination}

\begin{lemma} \label{lem.AverageConvergence}
Iterated application of Anytime Hierarchical $\linkage_{A}$-Clustering procedure in \reftab{tab.OHC} initiated from any clustering tree in $\bintreetopspace_{\indexset}$ for a fixed set of data points $\vectbf{x} \in \prl{\R^m}^{\indexset}$  terminates in finite time at  a clustering hierarchy in $\bintreetopspace_{\indexset}$ that is homogeneous relative to average linkage $\linkage_A$ \refeqn{eq.AverageLinkage}.   
\end{lemma}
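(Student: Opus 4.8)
The plan is to run the same kind of argument as in \refthm{thm.Convergence} --- since $\bintreetopspace_{\indexset}$ is finite \refeqn{eq.numBinTree} it suffices to rule out cycles in the sequence of hierarchies visited by \reftab{tab.OHC} --- but, because $\linkage_{A}$ is not NNI-reducible (\refdef{def.Contraction}), neither the objective $H_{\vectbf{x}, \linkage_{A}}$ \refeqn{eq.totallinkagecost} nor the level-tuple $L$ used there will serve: using the Lance--Williams identities below one sees that a single step of \reftab{tab.OHC} can strictly \emph{increase} $H_{\vectbf{x}, \linkage_{A}}$. I would therefore exhibit a different quantity that strictly decreases at every step.

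First I would record the local effect of one step. Suppose \reftab{tab.OHC} performs $\treetop^{k+1} = \NNI(\treetop^{k}, \cdot)$ with NNI-triplet $(A,B,C)$ (\reflem{lem.NNITriplet}), so that $A\cup B$ is the old parent cluster, $B\cup C$ the new one, and $P = A\cup B\cup C$ the common grandparent. Exactly as in the proof of \reflem{lem.linkageOptimality}, the triggering condition of \reftab{tab.OHC} gives $\linkage_{A}(\vectbf{x};A,B) > \linkage_{A}(\vectbf{x};B,C)$ and $\linkage_{A}(\vectbf{x};A,C) \geq \linkage_{A}(\vectbf{x};B,C)$. The step changes only two of the interior-node sibling linkages of the tree: it removes $\linkage_{A}(\vectbf{x};A,B)$ (the value at node $A\cup B$) and $\linkage_{A}(\vectbf{x};A\cup B,C)$ (the value at $P$), and inserts $\linkage_{A}(\vectbf{x};B,C)$ (the value at node $B\cup C$) and $\linkage_{A}(\vectbf{x};A,B\cup C)$ (the new value at $P$); by the recurrence \refeqn{eq.Recurrence} and \reftab{tab.Recurrence}, $\linkage_{A}(\vectbf{x};A\cup B,C)$ is a convex combination of $\linkage_{A}(\vectbf{x};A,C)$ and $\linkage_{A}(\vectbf{x};B,C)$, and $\linkage_{A}(\vectbf{x};A,B\cup C)$ is a convex combination of $\linkage_{A}(\vectbf{x};A,B)$ and $\linkage_{A}(\vectbf{x};A,C)$, with strictly positive weight on each summand.

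The potential I would use is $L_{A}(\treetop)$, the $(\card{\indexset}-1)$-tuple listing the value $\linkage_{A}(\vectbf{x};I,\compLCL{I}{\treetop})$ at each interior node of $\treetop$, sorted in nondecreasing order and compared lexicographically. I would then verify $L_{A}(\treetop^{k+1}) <_{\mathrm{lex}} L_{A}(\treetop^{k})$. From the two triggering inequalities and the convex-combination structure above, each inserted value is a convex combination of quantities $\geq \linkage_{A}(\vectbf{x};B,C)$, and in the case of $\linkage_{A}(\vectbf{x};A,B\cup C)$ it carries positive weight on $\linkage_{A}(\vectbf{x};A,B) > \linkage_{A}(\vectbf{x};B,C)$; hence $\linkage_{A}(\vectbf{x};B,C) < \min\{\linkage_{A}(\vectbf{x};A,B),\,\linkage_{A}(\vectbf{x};A,B\cup C)\}$ and $\linkage_{A}(\vectbf{x};B,C) \leq \linkage_{A}(\vectbf{x};A\cup B,C)$. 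Thus the smallest value occurring in the symmetric difference of the old and new interior-linkage multisets is $\linkage_{A}(\vectbf{x};B,C)$, it lies in the new multiset, and all remaining interior linkages are unchanged, so the sorted tuple strictly drops in lexicographic order. The one boundary case, $\linkage_{A}(\vectbf{x};A,C) = \linkage_{A}(\vectbf{x};B,C)$, is where the removed value $\linkage_{A}(\vectbf{x};A\cup B,C)$ equals $\linkage_{A}(\vectbf{x};B,C)$ and cancels against the inserted copy; there one passes to the next coordinate, where the inserted $\linkage_{A}(\vectbf{x};A,B\cup C) < \linkage_{A}(\vectbf{x};A,B)$ still forces a strict decrease. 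Since $L_{A}$ takes only finitely many values on $\bintreetopspace_{\indexset}$, a strictly lexicographically decreasing sequence of such tuples is finite, so \reftab{tab.OHC} halts, and by its Step~1 it halts only at a $\linkage_{A}$-homogeneous hierarchy.

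The main obstacle is twofold: recognizing that, unlike in \refthm{thm.Convergence}, no single scalar objective ($H_{\vectbf{x},\linkage_{A}}$ in particular) is monotone along a step --- which is what forces the multiset/lexicographic formulation --- and then carrying out the lexicographic comparison carefully enough to handle multiplicities when the removed and inserted values coincide with each other or with unchanged interior linkages.
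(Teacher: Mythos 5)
Your proposal is correct and follows essentially the same route as the paper: the paper likewise rules out cycles by showing that the ascending-sorted tuple of sibling linkage values strictly decreases lexicographically at each step, using the same two triggering inequalities, the same identification of the two removed and two inserted linkages, and the same two-case analysis (generic case versus the boundary case $\linkage_{A}(\vectbf{x};A,C)=\linkage_{A}(\vectbf{x};B,C)$, where the argument passes to $\linkage_{A}(\vectbf{x};A,B\cup C)<\linkage_{A}(\vectbf{x};A,B)$). The only cosmetic difference is that the paper's tuple ranges over all $2\card{\indexset}-1$ clusters rather than the $\card{\indexset}-1$ interior nodes, which does not affect the argument.
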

\begin{proof}
As in the proof of more general result in \refthm{thm.Convergence}, we shall show that the anytime clustering rule  does not cause any cycle in $\bintreetopspace_{\indexset}$ before terminating at a structurally homogeneous clustering hierarchy.
Consequently, the finite time termination of the anytime clustering method is simply due to finiteness of tree space $\bintreetopspace_{\indexset}$  \refeqn{eq.numBinTree}.

Let $S\prl{\treetop}$ denote  the ordered set of linkage values of a binary clustering hierarchy $\treetop \in \bintreetopspace_{\indexset}$  associated with $\vectbf{x}$ and $\linkage_{A}$ in ascending order, i.e.
\begin{align}
S\prl{\treetop} &\ldf \prl{\Big. \linkage_{A}\prl{\vectbf{x};  I,  \compLCL{I}{\treetop}}}_{   I  \in \cluster{\treetop}}, \\
S\prl{\treetop}_i &\leq  S\prl{\treetop}_j, \quad \forall \;1\leq i \leq j\leq 2\card{\indexset} -1, 
\end{align}
where $S\prl{\treetop} =\prl{\!S\prl{\treetop}_1\!, S\prl{\treetop}_2\!, \ldots, S\prl{\treetop}_{2\card{\indexset} -1}\! }$ and  note that the number of clusters of a binary tree over leaf set $\indexset$ is $2\card{\indexset} - 1$ \cite{schrijver2003}. 
Further, have the set of $\;\prl{2\card{\indexset} -1}$-tuple of real numbers ordered lexicographically according to the standard order of reals.

Let $\treetop^k \in \bintreetopspace_{\indexset}$ be a clustering hierarchy visited at $k$-th iteration of anytime hierarchical clustering of $\vectbf{x}$, where  $k \geq 0$.  
To prove the result, we shall show the following 
\begin{align}\label{eq.LinkageArrayMonotone}
S\prl{\treetop^k} > S\prl{\treetop^{k+1}}.
\end{align}

Since $\treetop^k$ and $ \treetop^{k+1}$ are NNI-adjacent, let $\prl{A^k,B^k,C^k}$ be the NNI-triplet (\reflem{lem.NNITriplet}) associated with the pair of $\prl{\treetop^k, \treetop^{k+1}}$.
Recall that $A^k \cup B^k \in \cluster{\treetop^k} \setminus \cluster{\treetop^{k+1}}$ and $B^k \cup C^k \in \cluster{\treetop^{k+1}} \setminus \cluster{\treetop^k}$.
 
Note that after the NNI transition from $\treetop^k$ to $\treetop^{k+1}$  two elements of $S\prl{\treetop^k}$, $\linkage_{A}\!\prl{\vectbf{x};\! A^k\!,\!B^k\!}$ and $\linkage_{A}\!\prl{\vectbf{x};\! A^k\sqz{\cup} B^k\!,\! C^k\!}$, are replaced by another two reals, $\linkage_{A}\!\prl{\vectbf{x};\! B^k\!,C^k\!}$ and $\linkage_{A}\!\prl{\vectbf{x};\! B^k\sqz{\cup} C^k \!, \! A^k\!}$, to yield $S\prl{\treetop^{k+1}}$.

By construction of the anytime clustering rule, we have 
\begin{align} \label{eq.NNIProp}
\linkage_{A}\prl{\vectbf{x}; A^k,B^k} &> \linkage_{A}\prl{\vectbf{x}; B^k,C^k}, \\
\linkage_{A}\prl{\vectbf{x}; A^k,C^k} &\geq \linkage_{A}\prl{\vectbf{x};B^k,C^k}, 
\end{align}
and, using the definition of average linkage \refeqn{eq.AverageLinkage}, one can verify that
\begin{align} \label{eq.ClusterProp}
\linkage_{A}\prl{\vectbf{x}; A^k \cup B^k, C^k} \geq \linkage \prl{\vectbf{x};B^k,C^k}.
\end{align}

If $\linkage_{A}\prl{\vectbf{x}; A^k \cup B^k, C^k} > \linkage_A \prl{\vectbf{x};B^k,C^k}
$, then it is clear  from \refeqn{eq.NNIProp} and \refeqn{eq.AverageLinkage} that
{\small 
\begin{align}
\min \prl{\big. \! \linkage_{A}\!\prl{\!\vectbf{x};\! A^k \!,\! B^k \!}\!, \linkage_{A}\!\prl{\!\vectbf{x};\! A^k  \sqz{\cup} B^k \!, \! C^k \!}\!\!} &\sqz{>}   \linkage_{A}\!\prl{\vectbf{x};\! B^k,\! C^k\!},\\
& \hspace{-21mm}\sqz{=} \min \prl{\big.\! \linkage_{A}\!\prl{\!\vectbf{x}; \!B^k \!, \! C^k \!}\!\!, \linkage_{A}\!\prl{\!\vectbf{x};\! B^k \sqz{\cup} C^k \!,\! A^k\!}\!\!}\!\!.\!\!
\end{align}
} 

Otherwise $\prl{\linkage_{A}\prl{\vectbf{x}; A^k \cup B^k, C^k} = \linkage_A \prl{\vectbf{x};B^k,C^k}}$, by definition \refeqn{eq.AverageLinkage}, we have $\linkage_A\prl{\vectbf{x}; B^k,C^k} = \linkage_{A}\prl{\vectbf{x}; A^k,C^k}$, and so $\linkage_{A} \prl{\vectbf{x}; A^k,B^k} > \linkage_{A}\prl{\vectbf{x}; B^k \cup C^k, A^k}$. 

In overall, the minimum of changed linkage values at each iteration of anytime clustering strictly decreases, which proves \refeqn{eq.LinkageArrayMonotone} and completes the proof.
\hfill \qed
\end{proof}

\section{Special Cases of  Average Linkage}
\label{app.SpecialAverageLinkage}

We now consider  certain settings of average linkage $\linkage_A$ \refeqn{eq.AverageLinkage}
that enables efficient computation of linkage values of a clustering hierarchy and its restructuring during online clustering.

Consider the squared Euclidean distance as a dissimilarity measure of a pair of data points, i.e. for any $\vect{x}, \vect{y} \in \R^m$
\begin{align}
\dist\prl{\vect{x}, \vect{y}} = \norm{\vect{x} - \vect{y}}_2^2.
\end{align}  
For any $\vectbf{x} \in \prl{\R^d}^{\indexset}$ and disjoint subsets $A,B \subseteq \indexset$, the average linkage $\linkage_{A}$ \refeqn{eq.AverageLinkage} between partial patterns $\vectbf{x}|A$ and $\vectbf{x}|B$, based on the squared Euclidean distance, is
{\small
\begin{align}
\linkage_A\prl{\vectbf{x}; A, B} &\sqz{=} \frac{1}{\card{A}\card{B}}\sum_{\substack{a \in A\\ b \in B}} \! \dist\prl{\vect{x}_a, \vect{x}_b} \sqz{=} \frac{1}{\card{A}\card{B}}\sum_{\substack{a \in A\\ b \in B}} \!\norm{\vect{x}_a \sqz{-} \vect{x}_b}_2^2\!,\!\! \\
&= \frac{1}{\card{A} \card{B}} \sum_{\substack{a \in A\\ b \in B}} \norm{\vect{x}_a - \ctrd{\vectbf{x}|B} + \ctrd{\vectbf{x}|B} - \vect{x}_b}_2^2. \label{eq.AverageLink1}
\end{align}
}
After expanding the norm, \refeqn{eq.AverageLink1} simplifies to
{\small
\begin{align}
\linkage_A\!\prl{\vectbf{x};\! A,\! B}&\sqz{=} \frac{1}{\card{A}} \! \sum_{a \in A} \! \norm{\vect{x}_a \sqz{-} \ctrd{\vectbf{x}|B}}_2^2 \sqz{+} \frac{1}{\card{B}} \! \sum_{\substack{ b \in B}} \! \norm{\vect{x}_b \sqz{-} \ctrd{\vectbf{x}|B}}_2^2\!,\!\!\! \\
&\hspace{-5mm}= \frac{1}{\card{A}} \! \sum_{a \in A} \norm{\vect{x}_a \sqz{-} \ctrd{\vectbf{x}|A} \sqz{+} \ctrd{\vectbf{x}|A} \sqz{-} \ctrd{\vectbf{x}|B}}_2^2 \sqz{+} \var{\vectbf{x}|B} \!. \!\!\! \label{eq.AverageLink2}
\end{align}
}
Using a similar trick on \refeqn{eq.AverageLink2}, one can conclude that the value of average linkage between $\vectbf{x}|A$ and $\vectbf{x}|B$ is a simple function of their centroids and variances, \footnote{This is generally referred to the ``bias-variance" decomposition of squared Euclidean distance \cite{hastie_tibshirani_friedman_2009}.} 
\begin{align} 
\linkage_A\!\prl{\vectbf{x};\! A,\! B} \sqz{=} \var{\vectbf{x}|A} + \var{\vectbf{x}|B} + \norm{\ctrd{\vectbf{x}|A} - \ctrd{\vectbf{x}|B}}_2^2.  \label{eq.AverageLinkSimple}
\end{align}
where for any $I \subseteq \indexset\;$ $\ctrd{\vectbf{x}|I}$\refeqn{eq.centroid} and $\var{\vectbf{x}|I}$\refeqn{eq.variance} denote the centroid and variance of $\vectbf{x}|I$, respectively.

A similar computational improvement for average linkage is also possible with the cosine dissimilarity --- another commonly used dissimilarity, in information retrieval and text mining \cite{salton_buckley_IPM1988}: for any $\vect{x}, \vect{y} \in \R^m$,
\begin{align}
\dist\prl{\vect{x}, \vect{y}} = 1 - \frac{\dotprod{\vect{x}}{\vect{y}}}{\norm{\vect{x}}_2 \norm{\vect{y}}_2},
\end{align}    
where  $\dotprod{}{}$  denote the Euclidean dot product.
For any dataset of unit length vectors $\vectbf{x} \in \prl{\Sp^{m-1}}^{\indexset}$ and disjoint subsets $A,B \subseteq \indexset$, the average linkage $\linkage_A$ \refeqn{eq.AverageLinkage} between $\vectbf{x}|A$ and $\vectbf{x}|B$, based on the cosine dissimilarity, can be rewritten as 
{\small
\begin{align}
\hspace{-2mm}\linkage_A\prl{\vectbf{x};\!A,\!B} &\sqz{=} \frac{1}{\card{A}\!\card{B}}\!\sum_{\substack{a \in A\\ b \in B}} \!\!\dist\prl{\vect{x}_a, \vect{x}_b} \sqz{=} 
\frac{1}{\card{A}\!\card{B}} \!\sum_{\substack{a \in A\\ b \in B}} \! 1 \sqz{-} \!\! \!\underbrace{\frac{\dotprod{\vect{x}_a}{\vect{x}_b}}{\norm{\vect{x}_a}_2 \!\norm{\vect{x}_b}_2}}_{ \norm{\vect{x}_a}_2 = \norm{\vect{x}_b}_2 = 1 }\!\!,\!\! \nonumber\\
&\sqz{=} 1 \sqz{-}  
\frac{1}{\card{A}\!\card{B}} \!\sum_{\substack{a \in A\\ b \in B}} \!  \dotprod{\vect{x}_a}{\vect{x}_b} 
\sqz{ = }  1 - \ctrd{\vectbf{x}|A} \cdot \ctrd{\vectbf{x}|B},
\end{align}   
}
which directly follows the linearity of the dot product.  

\section{Sample Variance and Mean \\ After Merging Clusters}
\label{app.VarianceMeanMerge}

It is well know that for any $\vectbf{x} \in \prl{\R^{d}}^{\indexset}$ and any disjoint subsets $A,B \subseteq \indexset$ of a finite index set $\indexset$, the centroid of merged patterns $\vectbf{x}|A \cup B$ is simply equal to the weighted average, proportional to the cardinality of sets, of centroids of partial patterns $\vectbf{x}|A$ and $\vectbf{x}|B$, 
\begin{align}\label{eq.CentroidMerge}
\ctrd{\vectbf{x}|A\cup B} = \frac{\card{A}}{\card{A}+\card{B}}\ctrd{\vectbf{x}|A} + \frac{\card{B}}{\card{A}+\card{B}}\ctrd{\vectbf{x}|B}.
\end{align}

Similarly, using \refeqn{eq.WardLinkageSSE}, one can verify that
for any disjoint subsets $A,B \subseteq \indexset$ and $\vectbf{x} \in \prl{\R^{d}}^{\indexset}$, the variance of merged patterns $\vectbf{x}|A\cup B$ is given by
{\small
\begin{align} \label{eq.VarianceMerge}
\var{\vectbf{x}|A\sqz{\cup} B} &\sqz{=} \frac{\card{A}}{\card{A}\sqz{+}\card{B}} \var{\vectbf{x}|A} \sqz{+} \frac{\card{B}}{\card{A}\sqz{+}\card{B}} \var{\vectbf{x}|B} \nonumber \\
& \hspace{15mm} \sqz{+} \frac{\card{A}\card{B}}{\prl{\card{A}\sqz{+}\card{B}}^2} \!\norm{ \big. \ctrd{\vectbf{x}|A} \sqz{-} \ctrd{\vectbf{x}|B}}_2^2. 
\end{align}
}

%

\end{document}